\newcommand{\mymacro}[1]{{#1}}
\newcommand{\defn}[1]{\textbf{#1}}
\newcommand{\pfsaRank}{{\mymacro{\mathrm{R}}}}
\newcommand{\valpha}{{\mymacro{\boldsymbol{\alpha}}}}
\newcommand{\vxi}{{\mymacro{\boldsymbol{\xi}}}}
\newcommand{\pdens}{{\mymacro{ p}}}
\newcommand{\qdens}{{\mymacro{ q}}}
\newcommand{\ind}[1]{\mathbbm{1} \left\{ #1 \right\}}
\newcommand{\R}{{\mymacro{ \mathbb{R}}}}
\newcommand{\RD}{{\mymacro{ \R^D}}}
\newcommand{\norm}[1]{{\mymacro{ \left\lVert #1 \right\rVert}}}
\newcommand{\alphabet}{{\mymacro{ \Sigma}}}
\newcommand{\eosalphabet}{{\mymacro{ \overline{\alphabet}}}}
\newcommand{\kleene}[1]{{\mymacro{#1^*}}}
\newcommand{\str}{{\mymacro{\boldsymbol{y}}}}
\newcommand{\strlt}{{\mymacro{ \str_{<\tstep}}}}
\newcommand{\strlen}{{\mymacro{T}}}
\newcommand{\sym}{{\mymacro{y}}}
\newcommand{\eossym}{{\mymacro{\overline{\sym}}}}
\newcommand{\syma}{{\mymacro{a}}}
\newcommand{\symb}{{\mymacro{b}}}
\newcommand{\defeq}{\mathrel{\stackrel{\textnormal{\tiny def}}{=}}}
\newcommand{\set}[1]{{\mymacro{\left\{ #1 \right\}}}}
\newcommand{\rank}{{\mymacro{\text{rank}}}}
\newcommand{\rankFun}[1]{{\mymacro{\rank\left(#1\right)}}}
\newcommand{\idx}{{\mymacro{ n}}}
\newcommand{\nstates}{{\mymacro{ |\states|}}}
\newcommand{\nsymbols}{{\mymacro{ |\alphabet|}}}
\newcommand{\eosnsymbols}{{\mymacro{ |\eosalphabet|}}}
\newcommand{\tstep}{{\mymacro{ t}}}
\newcommand{\pLM}{\mymacro{\pdens}}
\newcommand{\pLMFun}[1]{\pLM\left(#1\right)}
\newcommand{\qLM}{\mymacro{\qdens}}
\newcommand{\qLMFun}[1]{\qLM\left(#1\right)}
\newcommand{\eos}{{\mymacro{\textsc{eos}}}}
\newcommand{\symt}{{\mymacro{ \sym_{\tstep}}}}
\newcommand{\automaton}{{\mymacro{ \mathcal{A}}}}
\newcommand{\wfsa}{{\mymacro{ \automaton}}}
\newcommand{\stateq}{{\mymacro{ q}}}
\newcommand{\states}{{\mymacro{ Q}}}
\newcommand{\trans}{{\mymacro{ \delta}}}
\newcommand{\weight}{{\mymacro{ \textnormal{w}}}}
\newcommand{\apath}{{\mymacro{ \boldsymbol \pi}}}
\newcommand{\pathlen}{{\mymacro{ N}}}
\newcommand{\paths}{{\mymacro{ \Pi}}}
\newcommand{\initf}{{\mymacro{ \lambda}}}
\newcommand{\finalf}{{\mymacro{ \rho}}}
\newcommand{\initfFun}[1]{{\mymacro{\initf\left(#1\right)}}}
\newcommand{\finalfFun}[1]{{\mymacro{\finalf\left(#1\right)}}}
\newcommand{\transMtx}{{\mymacro{ \mT}}}
\newcommand{\wfsatuple}{{\mymacro{ \left( \alphabet, \states, \trans, \initf, \finalf \right)}}}
\newcommand{\edgenoweight}[3]{#1 \xrightarrow{#2} #3}
\newcommand{\edge}[4]{{\mymacro{#1 \xrightarrow{#2 / #3} #4}}}
\newcommand{\entropy}{{\mymacro{ \mathrm{H}}}}
\newcommand{\yield}{{\mymacro{\textbf{s}}}}
\newcommand{\pfsaAcr}{{\mymacro{PFSA}}\xspace}
\newcommand{\fslmAcr}{{\mymacro{RLM}}\xspace}
\newcommand{\dpfsaAcr}{{\mymacro{DPFSA}}\xspace}
\newcommand{\outMtx}{{\mymacro{ \mE}}}
\newcommand{\hiddDim}{{\mymacro{ D}}}
\newcommand{\Rhid}{{\mymacro{ \R^\hiddDim}}}
\newcommand{\Rhidd}{\Rhid}
\newcommand{\tranMtx}{{\mymacro{\mM}}}
\newcommand{\softmaxfunc}[2]{{\mymacro{ \mathrm{softmax}\!\left(#1\right)_{#2}}}} 
\newcommand{\softmaxFun}[2]{\softmaxfunc{#1}{#2}} 
\newcommand{\hiddState}{{\mymacro{ \vh}}}
\newcommand{\hiddStatetminus}{{\mymacro{ \hiddState_{\tstep - 1}}}}
\newcommand{\mathspan}{{\mymacro{\mathrm{span}}}}
\newcommand{\ones}{{\mymacro{\mathbf{1}}}}
\newcommand{\negterm}[1]{{\mymacro{ {\raise.17ex\hbox{$\scriptstyle\sim$}} #1}}}
\newcommand{\ignore}[1]{}
\newcommand{\expandLater}[1]{}
\newcommand{\transformernetwork}{{\mymacro{ \mathcal{T}}}}
\def\1{\mathbf{1}}
\newcommand{\datasetSize}{{\mymacro{N}}}
\def\vh{{{\mymacro{ \mathbf{h}}}}}
\def\mE{{{\mymacro{ \mathbf{E}}}}}
\def\mI{{{\mymacro{ \mathbf{I}}}}}
\def\mM{{{\mymacro{ \mathbf{M}}}}}
\def\mT{{{\mymacro{ \mathbf{T}}}}}
\newcommand{\E}{{\mymacro{ \mathbb{E}}}}
\newcommand{\KL}{{\mymacro{ \mathrm{D}_{\mathrm{KL}}}}}
\newcommand{\KLFun}[2]{\KL\left(#1 \mid \mid #2\right)}
\DeclareMathOperator*{\argmax}{{\mymacro{ argmax}}}
\DeclareMathOperator*{\argmin}{{\mymacro{ argmin}}}
\DeclareMathSymbol{\mlq}{\mathord}{operators}{``} 
\DeclareMathSymbol{\mrq}{\mathord}{operators}{`'} 
\newcommand{\cop}{1}
\newcommand{\ethz}{2}
\newcommand{\uzh}{3}
\title{What Languages are Easy to Language-Model? A Perspective from Learning Probabilistic Regular Languages}
\author{
 Nadav Borenstein$^{\cop}$ ~\;~
 Anej Svete$^{\ethz}$ ~\;~
 \textbf{Robin Shing Moon Chan}$^{\ethz}$ ~\;~
 \textbf{Josef Valvoda}$^{\cop}$\\
   \textbf{Franz Nowak}$^{\ethz}$~\;~
   \textbf{Isabelle Augenstein}$^{\cop}$~\;~
   \textbf{Eleanor Chodroff}$^{\uzh}$~\;~
  \textbf{Ryan Cotterell}$^{\ethz}$
\\
 $^{\cop}$Københavns Universitet~\;~  $^{\ethz}$ETH Zürich~\;~ 
 $^{\uzh}$Universit{\"a}t Zürich\\
 \{\texttt{\href{mailto:nb@di.ku.dk}{nb}},
 \texttt{\href{mailto:jval@di.ku.dk}{jval}}, \texttt{\href{mailto:augenstein@di.ku.dk} {augenstein}}\}\texttt{@di.ku.dk} \quad \texttt{\href{mailto:eleanor.chodroff@uzh.ch}{eleanor.chodroff@uzh.ch}}  \\
 \{\texttt{\href{mailto:asvete@inf.ethz.ch}{asvete}},
 \texttt{\href{mailto:robin.chan@inf.ethz.ch}{robin.chan}}, \texttt{\href{mailto:fnowak@inf.ethz.ch}{fnowak}}, \texttt{\href{mailto:ryan.cotterell@inf.ethz.ch}{ryan.cotterell}}\}\texttt{@inf.ethz.ch}
}
\begin{document}

\maketitle

\begin{abstract}
    What can large language models learn?
    By definition, language models (LM) are distributions over strings.
    Therefore, an intuitive way of addressing the above question is to formalize it as a matter of learnability of \emph{classes} of distributions over strings.
    While prior work in this direction focused on assessing the theoretical limits, in contrast, we seek to understand the empirical learnability.
    Unlike prior empirical work, we evaluate neural LMs on their home turf---learning probabilistic languages---rather than as classifiers of formal languages.
    In particular, we investigate the learnability of regular LMs (\fslmAcr{}s) by RNN and Transformer LMs.
    We empirically test the learnability of \fslmAcr{}s as a function of various complexity parameters of the \fslmAcr{} and the hidden state size of the neural LM. 
    We find that the \fslmAcr{} rank, which corresponds to the size of linear space spanned by the logits of its conditional distributions, and the expected length of sampled strings are strong and significant predictors of learnability for both RNNs and Transformers. 
    Several other predictors also reach significance, but with differing patterns between RNNs and Transformers.\looseness-1
\end{abstract}

\section{Introduction}\label{sec:intro}

Language models are, definitionally, distributions over strings.
However, not all neural LMs are capable of learning---or even representing---all possible distributions.
This raises two natural questions: What classes of distributions \emph{can} neural LMs represent and what can they learn from training examples?
In terms of the first question, the relationship between recurrent neural networks and more symbolic computational models has been subject to study for over three decades \citep{McCulloch1943,Kleene1956,siegelmann-sontag-1992,hao-etal-2018-context,DBLP:journals/corr/abs-1906-06349,merrill-2019-sequential,merrill-etal-2020-formal,hewitt-etal-2020-rnns,Chung2021,merrill-etal-2022-saturated,merrill2022extracting,svete2023recurrent,nowak-etal-2023-representational}.
Moreover, the prevalence of Transformer-based LMs has led to a recent body of work investigating their representational capacity \citep[e.g.,][]{hahn-2020-theoretical,ebrahimi-etal-2020-self,bhattamishra-etal-2020-ability,merrill-sabharwal-2023-parallelism}.
However, almost all of this work is theoretical, i.e., researchers seek theorems that give exact limitations on the capacity of specific neural LMs.
While such work provides a good characterization of what neural LMs could, in principle, learn, it does not speak to what LMs can learn in practice.\looseness=-1

\begin{figure}[t]
    \centering
    \includegraphics[width=\columnwidth]{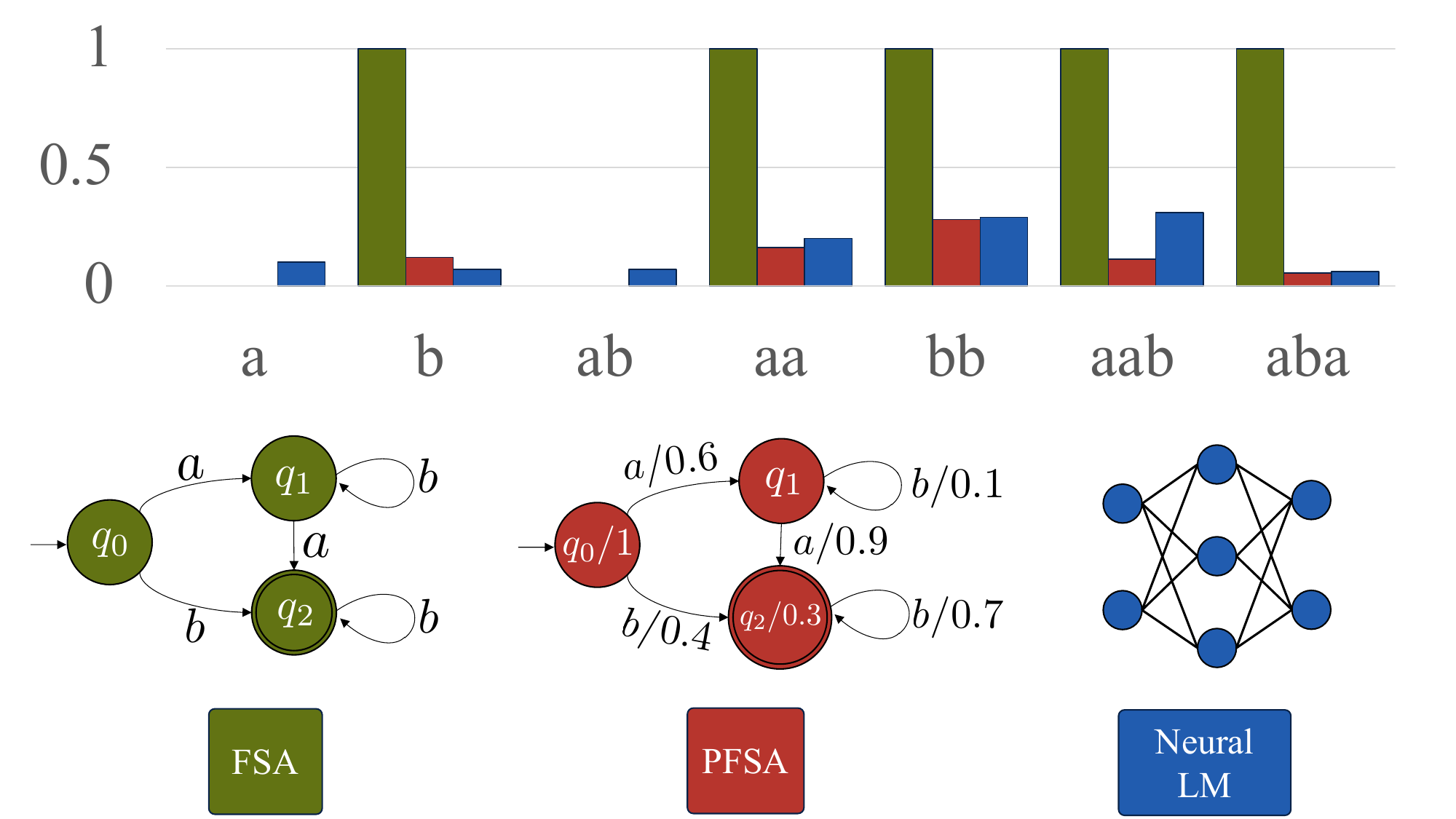}
    \caption{A \textcolor{ETHGreen}{finite-state automaton} defines a \emph{set} of strings by assigning string binary weights. 
    A \textcolor{ETHRed}{probabilistic finite-state automaton} and a \textcolor{ETHBlue}{neural LM} such as an RNN or a Transformer LM, however, define a \emph{probability distribution} over strings. 
   }
    \label{fig:example-fslm}
\end{figure}

In contrast to the above, a more empirically minded researcher might prefer to run a series of controlled experiments to determine what neural LMs can and cannot learn.
Their goal is to empirically characterize what classes of formal LMs, e.g., probabilistic finite-state automata, \emph{are} learnable with neural LMs in practice, using current best practices.
Such work can inform our understanding of what types of languages larger LMs trained on human-written text might represent---specifically, what grammatical structures they can recognize, and how efficiently they can do so.
All of the above is crucial for quantifying the practical capabilities, and limits, of neural LMs.
While plenty of empirical work has provided insights into the linguistic capabilities of modern LMs using linguistically annotated datasets \citep[e.g., ][]{10.1162/tacl-a-00115,hewitt-manning-2019-structural,jawahar-etal-2019-bert,liu-etal-2019-linguistic,ICARD2020102308,doi:10.1073/pnas.1907367117,10.1162/tacl-a-00349,belinkov-2022-probing}, human-annotated datasets give us limited insight into the types of distributions a neural LM can learn because the true distribution that the neural LMs is modeling is often unclear.
For instance, fitting an LM to Wikipedia leaves open to interpretation exactly \emph{which} probability distribution over strings the neural LM is modeling.
In contrast, learning a probabilistic formal language in a controlled experiment offers an unparalleled level of control.\looseness-1

A closer look at the existing work on the empirical learnability of formal languages (see \Cref{app:related_work} for an overview) reveals a categorical mismatch between what LMs are, i.e., \emph{probability distributions} over strings, and what learning a formal language means, i.e., classifying strings as members of a specific language, i.e., a \emph{set} of strings \cite{ebrahimi-etal-2020-self,deletang2023neural,wang-steinert-threlkeld-2023-evaluating}.
In response, we propose to investigate the practical representation capacity of neural LMs by testing their ability to learn \emph{distributions} over strings.
By sampling training corpora from probabilistic finite-state automata (\pfsaAcr), we can ask precise questions about the learnability.

We offer an empirical study, sampling datasets of 20k strings from 2100 randomly generated \pfsaAcr{s}, and training 15k RNN and Transformer language models with a varying hidden state size on datasets.
Our study is informed by various theoretical results on the representational capacity of RNNs concerning probabilistic finite-state automata. 
We assess the learnability by approximating the KL divergence between \pfsaAcr{s} and neural LMs.
In a regression analysis, we find that a large number of properties of the automaton, e.g., the number of states, the number of transitions, the rank of its emission matrix, and its entropy all contribute to learnability. 
In particular, the rank of the emission matrix and expected length of the sampled strings are strong predictors of learnability for both RNN and Transformer LMs. Several other predictors also demonstrate relevance in accounting for learnability, but have differing magnitudes and patterns of significance for the RNN and Transformer LMs. 
Similar to \citet{deletang2023neural}, we find that RNNs are better suited to modeling formal languages than Transformers.\looseness=-1


\section{Preliminaries} \label{sec:preliminaries}
We begin with an introduction of the relevant mathematical preliminaries, based on \citet{cotterell2024formal}.
An \defn{alphabet} $\alphabet$ is a finite, non-empty set of \defn{symbols}.
The \defn{Kleene closure} $\kleene{\alphabet}$ of the alphabet $\alphabet$ is the set of all strings of the symbols in $\alphabet$.
The \defn{length} of the string $\str = \sym_1\ldots\sym_\strlen \in \kleene{\alphabet}$, denoted by $|\str|=\strlen$, is the number of symbols the string contains.
A \defn{language model} $\pLM$ is a probability distribution over $\kleene{\alphabet}$.
Two LMs $\pLM$ and $\qLM$ are \defn{equivalent} if $\pLM\left(\str\right) = \qLM\left(\str\right)$ for all $\str \in \kleene{\alphabet}$.
Most modern LMs define $\pLM\left(\str\right)$ as a product of conditional probability distributions:
\begin{equation} \label{eq:lnlm}
    \pLM\left(\str\right) \defeq \pLM\left(\eos\mid\str\right) \prod_{\tstep = 1}^{|\str|} \pLM\left(\symt \mid \strlt\right),
\end{equation}
where $\eos \notin \alphabet$ is a special \underline{e}nd-\underline{o}f \underline{s}equence-symbol.
We denote $\eosalphabet \defeq \alphabet \cup \left\{\eos\right\}$ and $\overline{\sym}$ an element of $\eosalphabet$.

\subsection{Neural Language Models}

Representation-based neural LMs define the conditional distributions $\pLM\left(\overline{\sym}_t \mid \strlt\right)$ through a linearly transformed \defn{hidden state}.
In this paper, we focus on \defn{softmax-normalized}, \defn{representation-based} neural LMs, where a \defn{language encoder} $\hiddState \colon \kleene{\alphabet} \rightarrow \RD$ \citep{chan2024affine} computes a representation, and the conditional distributions of the neural LMs are defined as follows
\begin{subequations}
\begin{align}
    \pLM\left(\overline{\sym}_t \mid \strlt\right) & \defeq \softmaxFun{\outMtx \hiddState(\strlt)}{\overline{\sym}_t} \label{eq:neural_lm_conditinal} \\ &\defeq \frac{\exp\left(\outMtx\hiddState(\strlt)\right)_{\overline{\sym}_t}}{\sum_{\overline{\sym} \in \eosalphabet} \exp\left(\outMtx\hiddState(\strlt)\right)_{\overline{\sym}}}.
\end{align}
\end{subequations}
We will call $\outMtx \in \R^{\eosnsymbols \times \hiddDim}$ the \defn{output matrix}.

Representation-based neural LMs differ in how $\hiddState$ is computed as a function of $\strlt$.
In this paper, we consider the two most popular modern language modeling architectures: recurrent neural networks \citep{Elman1990}, specifically the LSTM variant \citep{10.1162/neco.1997.9.8.1735}, where $\hiddState$ is computed recurrently, and Transformers \citep{NIPS2017_3f5ee243}, where $\hiddState$ is computed with self-attention.\looseness=-1

\subsection{Regular Language Models}
A classic formalism for defining LMs is \defn{probabilistic finite-state automata} (\pfsaAcr{}s), a probabilistic version of finite-state automata that defines string probabilities.
Intuitively, a \pfsaAcr defines a \emph{finite} number of conditional next-symbol distributions $\pLM\left(\overline{\sym} \mid \stateq\right)$ based on a finite number of states $\stateq \in \states$ that summarize string prefixes analogous to how the hidden state $\hiddState$ of an RNN summarizes the prefix $\sym_1 \ldots \symt$.\looseness=-1

A \pfsaAcr moves between its states based on the input symbols according to the transitions defined by a transition relation.
It \defn{accepts} a string with the probability equal to the product of the transition weights along the string's path in the automaton and the last state's final weight (or the sum over all paths if there are multiple paths accepting the string).\footnote{Final weights of states are analogous to the $\eos$ symbol which signals the end of string generation in neural LMs.}
The \pfsaAcr is \defn{deterministic} (a \dpfsaAcr{}) if the transition relation is a \emph{function} of the current state and symbol, i.e., if, for all $\stateq \in \states$ and $\sym \in \alphabet$, there exists at most one $\stateq^\prime \in \states$ such that $\pLM\left(\stateq^\prime \mid \stateq, \sym\right) > 0$.
A \dpfsaAcr{} is \defn{minimal} if it has the smallest number of states among all its equivalent \dpfsaAcr{}{}s.\footnote{See \cref{sec:pfsas} for more formal details.}
The minimal \dpfsaAcr{} is unique up to the naming of the states. 

\begin{definition}
    The LM $\pLM$ is \defn{regular} (an \fslmAcr) if there exists an equivalent \pfsaAcr.
\end{definition}
\cref{fig:example-fslm} shows an example of an \fslmAcr defining a distribution over $\kleene{\set{\syma, \symb}}$ with $\pLM\left(\syma \symb^n \syma \symb^m\right) = \textcolor{ETHGreen}{1} \cdot 0.6 \cdot 0.1^n \cdot 0.9 \cdot 0.7^m \cdot \textcolor{ETHRed}{0.3}$ and $\pLM\left(\symb \symb^m\right) = \textcolor{ETHGreen}{1} \cdot 0.4 \cdot 0.7^m \cdot \textcolor{ETHRed}{0.3}$.


\section{Representing \fslmAcr{}s with Neural LMs} 
\label{sec:theory}
Neural LMs have demonstrated an ability to model human language well. This raises the question of what enables their success in capturing human language and how we can improve them further.
However, neural LMs are notoriously challenging to analyze, making it difficult to state any formal claims on what they are (in)capable of modeling.
To amend this, a large body of work has linked neural LMs to formal models of computation.
\dpfsaAcr{}s feature prominently in this line of research \citep{merrill-2019-sequential,merrill-etal-2020-formal,svete2023recurrent}.
To facilitate a detailed inspection of how neural LMs can represent \dpfsaAcr{}s, we now formalize \dpfsaAcr{}s in a way that is particularly easy to connect to softmax-normalized representation-based neural LMs, see \cref{sec:pfsas} for the definition of \dpfsaAcr{}.\looseness-1

\subsection{Softmax-Normalized \dpfsaAcr{}s}
\label{sec:softmax-dpfsa}
The conditional distributions $\pLM\left(\overline{\sym}\mid\stateq\right)$ defined by a \dpfsaAcr{} can, in general, be arbitrary distributions over $\eosalphabet$---a \dpfsaAcr{} therefore defines $\nstates$ distributions, each with $\eosnsymbols - 1$ degrees of freedom.
As we will see, such a parameterization makes the connection to neural LMs, which define conditional distributions in terms of shared parameters of the neural network and the output matrix $\outMtx$, straightforward. 
To make this connection clear, consider a \dpfsaAcr{} $\automaton$ with support over $\kleene{\alphabet}$.
Because $\automaton$ is deterministic, there exists a function $\hiddState_{\automaton} \colon \kleene{\alphabet} \rightarrow \{0, 1\}^{\nstates}$ that maps every string that the unique state $\automaton$ enters to a one-hot encoding after reading in the string.
Encoding the final weight as $\eos$, we end up with the following autoregressive formulation of $\automaton$,\looseness=-1
\begin{equation}\label{eq:rep-based-dpfsa}
\pLM\left(\overline{\sym}_t \mid \strlt \right) \defeq \softmaxFun{\transMtx \hiddState_\automaton(\strlt)}{\overline{\sym}_t}.
\end{equation}
where $\transMtx \in \R^{\eosnsymbols \times \nstates}$.
Note that it is because we sought to use a softmax that we require support over all of $\alphabet^*$.
Also, despite its deceiving notation, \cref{eq:rep-based-dpfsa}, is not, in this form, obviously a neural LM as we did not give a neural network that computes $\hiddState_\automaton$---its computation is performed by a \dpfsaAcr{}.
Written as in \cref{eq:rep-based-dpfsa}, we say that a \dpfsaAcr{} is of rank $\pfsaRank$ if the output matrix $\transMtx$ is of rank $\pfsaRank$.\looseness=-1


\paragraph{Minimal representations.}
If $\automaton$ is a minimal \dpfsaAcr{}, then the linear subspace spanned by $\{\hiddState_{\automaton}(\str) \mid \str \in \kleene{\alphabet} \}$ will be a subspace of $\R^{\nstates}$ of dimensionality $\nstates$ due to the one-hot nature of the state encodings. 
However, if $\transMtx$ is of rank $\pfsaRank$, then
the linear subspace $\{\transMtx\hiddState_{\automaton}(\str) \mid \str \in \kleene{\alphabet} \}$ cannot span a subspace of dimensionality larger than $\pfsaRank$. 
Therefore, for a neural LM to be equivalent to the \dpfsaAcr{}, the hidden state's dimension must be large enough. 
If that is not the case, the neural LM will naturally \emph{not} be able to match all the conditional distributions defined by the states of the \dpfsaAcr, which leads us to the following result.\looseness=-1
\begin{restatable}{reTheorem}{minSizeThm} \label{thm:rank}
    Let $\pLM$ be a language model induced by a minimal \dpfsaAcr{} $\automaton$ with full support.
    Furthermore, let $\pfsaRank$ be the rank of $\pLM$ when expressed as a representation-based LM (\cref{eq:rep-based-dpfsa}).
    Then, \emph{any} equivalent representation-based LM $\qLM$ must have a hidden state of size at least $\pfsaRank$.\looseness=-1
\end{restatable}
\begin{proof}
    See \cref{sec:proofs}.
\end{proof}
Given a rank-$\pfsaRank$ \dpfsaAcr $\wfsa$, \cref{thm:rank} says that an equivalent neural LM needs a hidden state of size at least $\pfsaRank$, establishing a general lower bound on an LM's hidden state size for equivalence with a \dpfsaAcr{}. 
A hidden state of size $\pfsaRank$, however, does not \emph{guarantee} that the neural LM can implement the transitions between \dpfsaAcr{}'s states.
The neural LM's ability to do so depends on the particular architecture implementing the neural LM.
In other words, \cref{thm:rank} furnishes us with a necessary condition. 
In the rest of the section, we briefly recapitulate some known results on the relationship between \dpfsaAcr{}s and the two neural architectures in question---recurrent neural networks and Transformers.

\subsection{\dpfsaAcr{}s and Recurrent Neural LMs}
RNNs' connection to \dpfsaAcr{}s is particularly well understood for recurrent neural LMs \citep[e.g.,][]{peng-etal-2018-rational,merrill-2019-sequential,merrill-etal-2020-formal,svete2023recurrent,svete-etal-2024-lower}.
The following theorem from \citet{svete2023recurrent} very concretely summarizes the relationship between \dpfsaAcr{}s and finite-precision RNNs, i.e., ones where the representations $\hiddState$ fall into some finite subset of $\Rhidd$.
\begin{restatable}[\citet{svete2023recurrent}, Thm. 4.1]{reTheorem}{oldTheoremOne} \label{thm:restatement-1}
    The classes of finite-precision RNN LMs and \dpfsaAcr{} are equivalent.
\end{restatable}
While \cref{thm:restatement-1} ensures the existence of a finite-precision RNN LM weakly equivalent to a given \dpfsaAcr{}, it does not describe the size of the hidden state required for doing so.
The following theorem makes the relationship more precise: It states that, in general, the size of the hidden state of RNN LMs must scale linearly with the number of states if the \dpfsaAcr{} is full-rank, i.e., of rank $\nstates$.\looseness=-1
\begin{restatable}[\citet{svete2023recurrent}, Thms. 5.1 and 5.2]{reTheorem}{oldTheoremTwo} \label{thm:restatement-2}
A family of \dpfsaAcr{}s exists such that the hidden states sizes of equivalent RNN LMs scale linearly with $\nstates$ and $\nsymbols$.\looseness=-1
\end{restatable}
Our \cref{thm:rank} can be seen as a more precise version of \cref{thm:restatement-2}, which does not consider the rank of the \dpfsaAcr{}.\footnote{The linear scaling with respect to $\nstates$ is unique to weighted FSAs; more efficient constructions simulating \emph{unweighted} FSAs with RNNs exist \citep{Indyk95,svete2023recurrent}, highlighting the importance of considering probabilistic models separately.}

\subsection{\dpfsaAcr{}s and Transformer LMs}
Statements similar to \cref{thm:restatement-1} are more difficult to make for Transformer LMs due the their parallelizable nature \citep{merrill-sabharwal-2023-parallelism,svete2024transformers}; unlike \dpfsaAcr{}s, Transformers do not compute a sequential inner state.
Existing work has connected Transformers to sequential automata like \dpfsaAcr{}s with the empirically successful framework of chain-of-thought reasoning \citep{perez-turing,feng2023revealing,merrill2023expressive} or by assuming a growing number of layers with increasing string length \citep{liu2023transformers}.
However, these results concern binary language recognition rather than the task of language models. To the best of our knowledge, the only work that directly involves language modeling setting connects Transformers to the simple $n$-gram LMs, a special, well-structured class of \dpfsaAcr{}s.\looseness=-1
\begin{restatable}[\citet{svete2024transformers}, Thm 3.1]{reTheorem}{oldTheorem3}\label{thm:restatement-3}
    For any $n$-gram LM $\pLM$, there exists an equivalent Transformer LM $\pLM_\transformernetwork$ with hard attention.\looseness=-1
\end{restatable}
Unlike \cref{thm:restatement-1}, \cref{thm:restatement-3} does not provide a complete characterization of Transformer LMs but rather a (loose) lower bound on their capabilities, in line with their weaker connection to sequential automata.
We are not aware of any claims analogous to \cref{thm:restatement-2} for Transformer LMs; this makes the generally applicable \cref{thm:rank} that much more interesting, as it lower-bounds the size of the vectorial representations regardless of the neural architecture used to compute them.
This distinction is also mirrored in the empirical part of the paper.\looseness=-1

\subsection{Beyond Representational Capacity}
The theoretical results presented and summarized here capture the fact that the size of the hidden representations in neural LMs must inevitably scale with the size of the \dpfsaAcr{} being simulated or learned (either with its rank, its number of states, or the size of the alphabet).
In that sense, the representational capacity of neural LMs can be theoretically described relatively comprehensively in terms of formal models of computation.
However, existing theoretical work only considers the question of which distributions can be \emph{represented} by a neural LM. 
This leaves us with a large gap in understanding which distributions are \emph{learnable} by neural LMs.
Compared to pure representational capacity results, formal claims about learning are much more difficult to make due to the dependence on factors such as the learning algorithm and aspects of the training data. Therefore, we test the learnability of \dpfsaAcr{}s \textit{empirically}, allowing us to overcome the above-mentioned difficulties and gain valuable insights into the problem.

\section{Practical Learnability of \fslmAcr{}s}
Our main goal is to provide a principled study of the ability of neural LMs to learn \fslmAcr{}s.
We now describe and justify our experimental design.

\subsection{A Critique of Learning Formal Languages}
As discussed in \cref{sec:intro}, much empirical work has investigated the ability of neural language models to learn formal languages such as those described by finite-state automata, i.e., how well a neural language model can be used to assess membership of individual strings in a set. 
There are multiple workarounds for this discrepancy.
Most solutions involve measuring some sort of \emph{accuracy} of next-symbol prediction.
For example, \citet{suzgun-etal-2019-lstm,suzgun-etal-2019-evaluating} and \citet{bhattamishra-etal-2020-ability} evaluate neural LMs on the next-symbol prediction task, which, intuitively, measures whether all allowed continuations of the string under the formal model achieve a large enough probability under the neural LM.
\citet{deletang2023neural} evaluate the models by the proportion of correctly predicted tokens (where the approximate $\argmax$ of the neural LM prediction has to match the ground-truth label).
Unfortunately, all these approaches inevitably shoehorn a neural LM into a sort of classifier, mismatching the type of an LM---a probability distribution---and a formal language---a set, as illustrated in \cref{fig:example-fslm}.
Ideally, we would like to measure precisely how the neural LM has learned the \emph{distribution} induced by a formal LM.
In this section, we outline and motivate a possible way to approach this challenge.


\begin{table}
    \centering
    \fontsize{9}{9}\selectfont
    \begin{tabular}{p{0.12\textwidth}p{0.305\textwidth}}
    \toprule
    Predictor & Interpretation \\
    \midrule
    $\nstates$    &  The number of states \\
    $\nsymbols$    & The number of symbols \\
    $\nstates\nsymbols$ & The number of transitions\\
    $\pfsaRank$ & The rank of the output matrix $\outMtx$ \\
    Exp. length & Expected length of strings sampled from \pfsaAcr \\
    $\min(\nstates, \eosnsymbols)$ & (Tight) upper bound on $\pfsaRank$ \\
    $\entropy(\wfsa)$ & Entropy of  \pfsaAcr{}
    \\ \midrule 
     $\hiddDim$ & The hidden state size of the Elman RNN or Transformer \\
    \bottomrule
    \end{tabular}
    \caption{The predictor variables used to estimate KL divergence with their interpretation: the \pfsaAcr{}-related properties are listed first, followed by those of the neural LM ($\hiddDim$).}
    \label{tab:axes}
\end{table}

At a high level, we test the learnability of \emph{random} representation-based \fslmAcr{}s by training neural LMs on strings sampled from randomly generated \dpfsaAcr{}s and measuring the distance between the neural LM and the \dpfsaAcr{}.
Crucially, unlike most existing work, we do not have to rely on classification or next-symbol prediction accuracy-based metrics, but rather we directly measure the similarity of distributions which presents a much cleaner way of evaluating model similarity.
Concretely, given an \dpfsaAcr $\pLM$ and a neural LM $\qLM$, we measure the KL divergence between the \dpfsaAcr and the neural LM:
\begin{subequations}
\begin{align}
    \KLFun{\pLM}{\qLM} &\defeq \sum_{\str \in \kleene{\alphabet}} \pLMFun{\str} \log\frac{\pLMFun{\str}}{\qLMFun{\str}} \\
    &= \entropy(\pLM, \qLM) - \entropy(\pLM) \label{eq:entropy-diff}.
\end{align}
\end{subequations}
The KL divergence is an established and well-understood measure of the distance\footnote{Note, that KL divergence is not a \emph{true} distance, as it is not symmetric and does not fulfill the triangle inequality.} between two \emph{distributions}. 
As such, it lends itself naturally to evaluating the difference between LMs; in our case, measuring how well the neural LM has captured the distribution of the \dpfsaAcr{}.
Such a holistic treatment of the difference between two LMs gives us a tangible and interpretable way of understanding how they differ.
To compute $\KLFun{\pLM}{\qLM}$, we use \Cref{eq:entropy-diff}. 
We estimate the first term $\entropy\left(\pLM, \qLM\right)$ by computing $\widehat{\entropy}\left(\pLM, \qLM\right)$, the empirical cross-entropy between $\pLM$ and $\qLM$.
The second term can be computed exactly by dynamic programming \citep{eisner-2002-parameter,zmigrod-etal-2021-efficient-computation}.\footnote{\textcolor{brickred}{$\KLFun{\pLM}{\qLM}$ can therefore be computed exactly if $\pLM$ and $\qLM$ are both DPFSAs. When $\qLM$ is a neural LM, however, the finite-sample approximation of $\KLFun{\pLM}{\qLM}$ is required. An issue may arise if $\KLFun{\pLM}{\qLM} = \infty$ since that would imply we are approximating $\infty$ with a finite sum. Nevertheless, we do not observe any signs of divergence of the finite approximations. Moreover, since the approximation comes from computing $\widehat{\entropy}\left(\pLM, \qLM\right)$, which, in our case, is exactly the objective minimized during the training of $\qLM$ (albeit on a different set of strings), we argue that $\entropy\left(\pLM, \qLM\right)$ is unlikely to be infinite.}}
See \cref{app:evaluation_setup} for further details on the computation of these evaluation metrics.\looseness=-1

\begin{table}
    \centering
    \fontsize{10}{10}\selectfont
    \sisetup{table-format=3.2, group-minimum-digits=3}
    \begin{tabular}{lrrr}
        \toprule
        Predictor & $\widehat{\beta}$ & SE & $p$-value \\
        \midrule
        Intercept & $8.72$ & $0.08$ & $<0.001$ \\
        $\nstates$ & $0.68$ & $0.15$ & $<0.001$ \\
        $\nsymbols$ & $0.22$ & $0.20$ & $0.26$ \\
        $\nstates \nsymbols$ & $0.23$ & $0.13$ & $0.07$ \\
        $\pfsaRank$ & $4.10$ & $0.10$ & $<0.001$ \\
        Exp. len. & $3.21$ & $0.19$ & $<0.05$ \\
        $\min(\nstates, \nsymbols)$ & $-0.15$ & $0.26$ & $0.58$ \\
        $\entropy(\wfsa)$ & $-0.88$ & $0.22$ & $<0.001$ \\
        $\hiddDim$ & $-0.63$ & $0.08$ & $<0.001$ \\
        \bottomrule
    \end{tabular}
    \caption{Estimated coefficients ($\widehat{\beta}$), standard errors (SE), and $p$-values for $\KL$ generated with a linear regression model for RNNs.\looseness=-1}
    \label{tab:rnns_coefficients}
\end{table}


\subsection{Generating Random \dpfsaAcr{}s}
We evaluate neural LMs' ability to learn \emph{random} representation-based \dpfsaAcr{}s, which we construct as follows.
First, we sample the number of states $\nstates \in \left\{2, 4, 6, 8, 10, 12, 16\right\}$ and the number of symbols $\nsymbols \in \left\{2, 4, 6, 8, 10, 12, 16\right\}$, both uniformly at random.
Then, we select the destination for the outgoing arcs of each of the states---one for each $\sym \in \alphabet$, i.e., for each $\stateq \in \states$ and $\sym \in \alphabet$ we randomly choose $\stateq{}' \in \states$ and add the transition $\edgenoweight{\stateq}{\sym}{\stateq{}'}$ to $\wfsa$.
Again, the sampling is done uniformly at random. 
We add weights to the transition function of $\wfsa$ as follows. 
First, we generate a random matrix $\transMtx \in \R^{|\eosalphabet| \times \nstates}$. 
For each $\pfsaRank \in \{1, 2, 4, 6, 8, 10, 12, 16\}$ satisfying $\pfsaRank \le \min\left(\nstates, \eosnsymbols\right)$, we define 
$\transMtx^\pfsaRank$ to be the best rank-$\pfsaRank{}$ approximation of $\transMtx{}$, i.e., $\transMtx^\pfsaRank = \argmin_{\mM} \norm{\mM - \transMtx}^2_{\mathrm{F}}$ subject to the constraint $\text{rank}(\mM) = \pfsaRank$.
This is an example of the Procrustes problem and can be efficiently computed using SVD.
We then set the transition probability of $\edgenoweight{\stateq}{\sym}{\stateq{}'}$ to $\weight_{\stateq, \sym} = \softmaxFun{\transMtx{}^\pfsaRank{}_{:, \stateq}}{\sym}$. 
Similarly, we set the final weight $\finalf \left( \stateq \right) = \softmaxFun{\transMtx{}^{\pfsaRank{}}_{:, \stateq}}{\eos{}}$.\looseness=-1

As the Transformer LMs have a limited context length, it is undesirable to sample \dpfsaAcr{}s that generate strings that are longer than the context length with high probability. 
This would introduce an artificial discrepancy between the entropy of the \dpfsaAcr{}s and the empirical entropy of the truncated strings used to train and evaluate the Transformer models. Therefore, we filter out \dpfsaAcr{}s with high expected sequence length. In practice, we filter out \dpfsaAcr{}s with an expected length larger than the median value of expected lengths, i.e., half of the \dpfsaAcr{}s.
See \Cref{app:evaluation_setup} for details about calculating the expected string length of a \dpfsaAcr{}.\looseness=-1

This process results in the generation of up to eight random \dpfsaAcr{}s,
\footnote{We have that $|\{r \mid r \le \min\left(\nstates, \nsymbols\right), r \in \{1, 2, 4, 6,$ $ 8, 10, 12, 16\} \}| \le 8$}
all sharing the same $\states, \alphabet$, and underlying transition function. 
They differ, however, in the rank of the matrix $\transMtx{}^\pfsaRank{}$ that defines the weights of the transitions. 
Furthermore, the construction of exactly one transition for each $\stateq$ and $\sym$ ensures that the \dpfsaAcr{} mirrors the nature of a neural LM, which also defines full-support next-symbol probabilities for any prefix of the string.
Altogether, this allows us to precisely control the quantities from \cref{tab:axes} and thus the complexity of the \dpfsaAcr{}.
See \cref{app:sampling_pdfas} for additional details.

\section{Results}

\setlength{\tabcolsep}{5.5pt}
\begin{table}
    \centering
    \fontsize{9}{10}\selectfont
    \sisetup{table-format = 3.2, group-minimum-digits=3}
    \begin{tabular}{lrrrrrrr}
        \toprule
        \multirow{2}{*}{$\KL$} & \multicolumn{7}{c}{$\nstates$} \\  \cmidrule{2-8}
         & 2 & 4 & 6 & 8 & 10 & 12 & 16 \\
        \midrule
        RNNs & 2.6 & 6.9 & 8.3 & 7.7 & 8.6 & 10.7 & 11.1 \\
        Trans. & 8.8 & 13.7 & 14.4 & 11.9 & 13.5 & 14.4 & 15.3 \\
        \bottomrule
    \end{tabular}

    \caption{KL divergence for RNNs and Transformers as a function of the number of states of the \dpfsaAcr{}.}
    \label{tab:general_nstates}
\end{table}

\subsection{Statistical Evaluation}\label{sec:mixed_effect}
Following the experimental setup outlined in \cref{sec:experimental-details}, we obtained $\KL$ results for 15k RNN and 15k Transformer LMs of differing hidden state dimensions, trained on strings sampled from 2100 random \dpfsaAcr{}s with specific sets of complexity parameters. 
The complexity parameters of \dpfsaAcr{}s correspond to the number of states, number of symbols, and the \dpfsaAcr{} rank; additional derivative metrics can also capture overall complexity, including the number of transitions, the expected string length, the upper bound of the \dpfsaAcr{} rank, and the \dpfsaAcr{} entropy (see \cref{tab:axes}).
We expect the difficulty of learning, quantified as KL divergence, to increase with each complexity parameter and to decrease with the hidden state size.\looseness=-1

As shown in Fig. \ref{fig:rank_states}, as the number of states and \dpfsaAcr{} rank increase, overall learning difficulty also empirically increases for both RNNs and Transformers. To demonstrate that the observed effect is not wholly reducible to differences in entropy, the relationship between the combined number of states and rank with \dpfsaAcr{} entropy is also presented. (Note that the KL divergence is the LM loss minus the \dpfsaAcr{} entropy.)
A more nuanced picture, however, arises in the empirical relationship of alphabet size and DPFSA rank with overall learning difficulty (Fig. \ref{fig:symbols_rank}). 
Finally, Fig. \ref{fig:rank_outout_dim} demonstrates that learning difficulty numerically increases with rank, though a large hidden state size of the neural LM reduce this difficulty to some degree.\looseness=-1

 
To assess the influence of such effects on KL divergence, we implemented a linear regression model for the output of each neural LM. The independent variables were the \dpfsaAcr{} complexity parameters, as well as the neural LM's hidden state size $\hiddDim$.
The full list of predictors is shown in \cref{tab:axes}.
Prior to model fitting, each predictor was standardized using a $z$-score transformation for an interpretable comparison of the estimated coefficients.\looseness=-1





\subsection{RNN Findings}\label{sec:results}

\begin{table}
    \centering
    \fontsize{10}{10}\selectfont
    \sisetup{table-format=3.2, group-minimum-digits=3}
    \begin{tabular}{lrrr}
        \toprule
        Predictor & $\widehat{\beta}$ & SE & $p$-value \\
        \midrule
        Intercept & $13.5$ & $0.10$ &  $<0.001$ \\
        $\nstates$ & $0.60$ & $0.19$ & $<0.01$ \\
        $\nsymbols$ & $2.10$  & $0.24$ & $<0.001$ \\
        $\nstates \nsymbols$ & $0.64$ & $0.16$ & $<0.001$ \\
        $\pfsaRank$ & $2.99$ & $0.13$ & $<0.001$ \\
        Exp. len. & $11.70$ & $0.24$ & $<0.001$ \\
        $\min(\nstates, \nsymbols)$ &  $-1.36$ & $0.33$ & $<0.001$ \\
        $\entropy(\wfsa)$ & $-7.89$ & $0.28$ & $<0.001$ \\
        $\hiddDim$ & $-2.76$ & $0.10$ & $<0.001$ \\
        \bottomrule
    \end{tabular}
    \caption{Estimated coefficients ($\widehat{\beta}$), standard errors (SE), and $p$-values for $\KL$ generated with a linear regression model for Transformers.}
    \label{tab:Transformers_coefficients}
   \vspace{-2.5pt}
\end{table}
As shown in \cref{tab:rnns_coefficients}, the KL divergence was significantly influenced by the number of states, \dpfsaAcr{} rank, expected string length, \dpfsaAcr{} entropy, and RNN hidden state size ($\nstates$, $\pfsaRank$, Exp. len., $\entropy(\wfsa)$, $\hiddDim$). 
The number of symbols, number of transitions, and the upper bound of $\pfsaRank$ did not reach significance ($\nsymbols$, $\nstates \nsymbols$, $\min(\nstates, \nsymbols)$).
Of the significant effects, the number of states, expected length, and rank were positive in direction, indicating an increase in KL divergence with an increase in the predictor of interest. Perhaps surprisingly, the \dpfsaAcr{} entropy was negative in influence, indicating a decrease in KL with an increase in \dpfsaAcr{} entropy. 
Unsurprisingly, the RNN hidden state size was also negative, indicating a decrease in KL with an increase in hidden state size.
Overall, the \dpfsaAcr{} rank had the strongest influence on KL divergence, followed by the expected length,
the \dpfsaAcr{} entropy, number of states, and RNN hidden state size.
The remaining predictors were smaller in influence.\looseness=-1

\subsection{Transformer Findings}
The linear regression model of the Transformers data revealed significant effects of all included predictors on KL divergence (see \cref{tab:Transformers_coefficients}). 
Of these, the number of states, number of symbols, number of transitions, \dpfsaAcr{} rank, and expected string length were positive in influence, indicating an increase in KL divergence as the predictor of interest increased ($\nstates$, $\nsymbols$, $\nstates \nsymbols$, $\pfsaRank$, Exp. len.). 
The upper bound of $\pfsaRank$, \dpfsaAcr{} entropy, and the Transformer hidden state size were negative in influence, indicating a decrease in KL divergence with an increase in the predictor of interest ($\entropy(\wfsa)$, $\hiddDim$). 
Of the positive relationships, the expected string length had the largest influence, followed in order by \dpfsaAcr{} rank ($\pfsaRank$), then number of symbols ($\nsymbols$).
Of the negative relationships, \dpfsaAcr{} entropy had the largest influence, followed by the Transformer hidden state size, then by the upper bound of $\pfsaRank$.

\begin{figure*}
    \centering
    \includegraphics[width=0.99\textwidth]{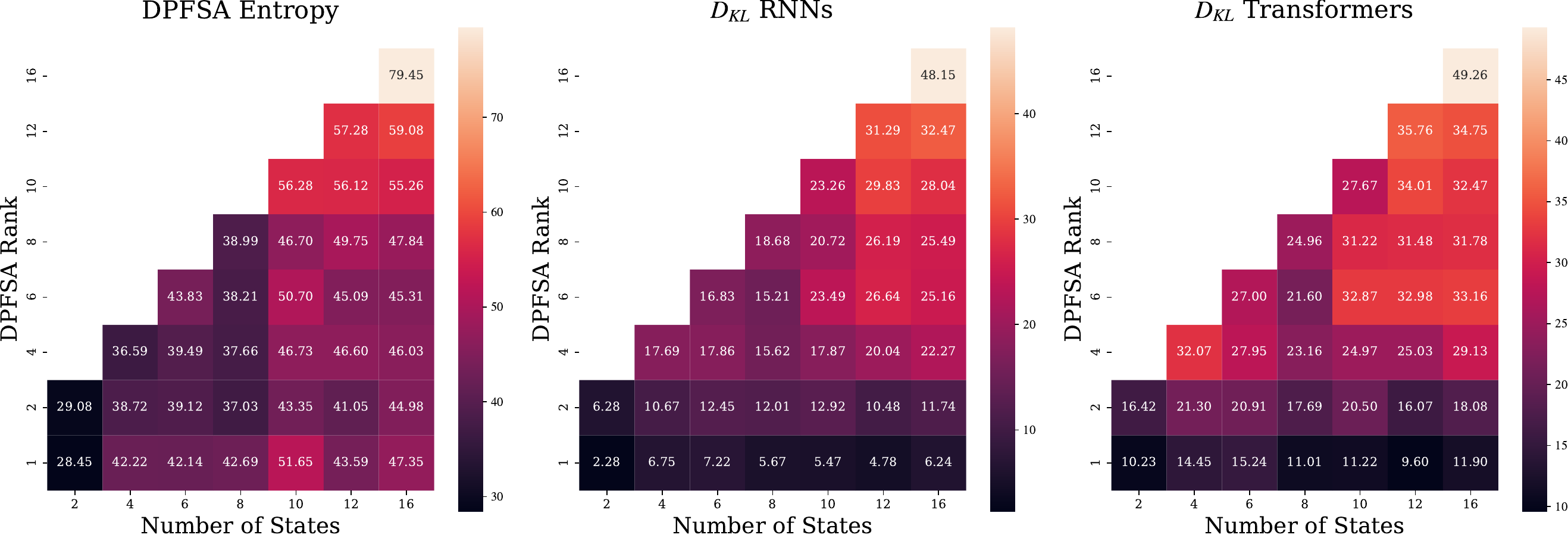}
    \caption { \dpfsaAcr{}'s entropy $\entropy$ and the $\KL$ between the neural LMs and the \dpfsaAcr{}s (in bits) as a function of $\nstates$ and $\pfsaRank$.}
    \label{fig:rank_states}
\end{figure*}
   
\begin{figure*}
    \centering
    \includegraphics[width=0.99\textwidth]{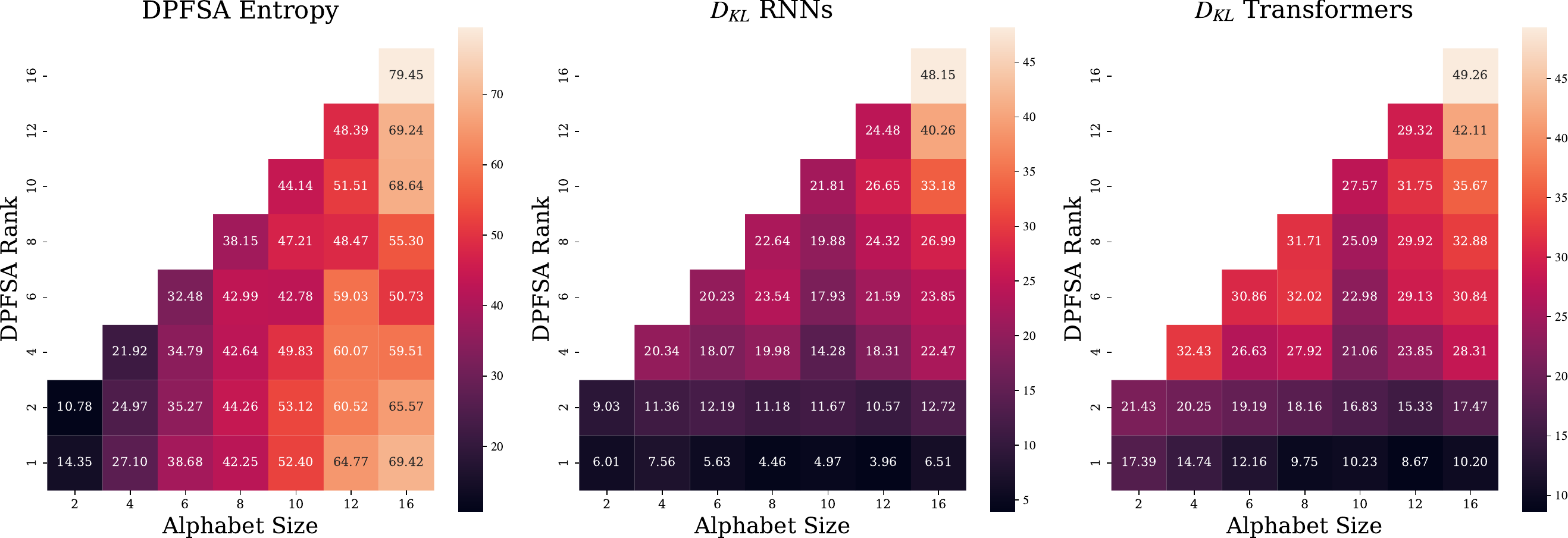}
    \caption{ \dpfsaAcr{}'s entropy $\entropy$ and the $\KL$ between the neural LMs and the \dpfsaAcr{}s (in bits) as a function of $\nsymbols$ and $\pfsaRank$.
    \setlength{\belowcaptionskip}{-15pt}
    }
    \label{fig:symbols_rank}
\end{figure*}
   
\begin{figure*}
    \centering
    \includegraphics[width=0.99\textwidth]{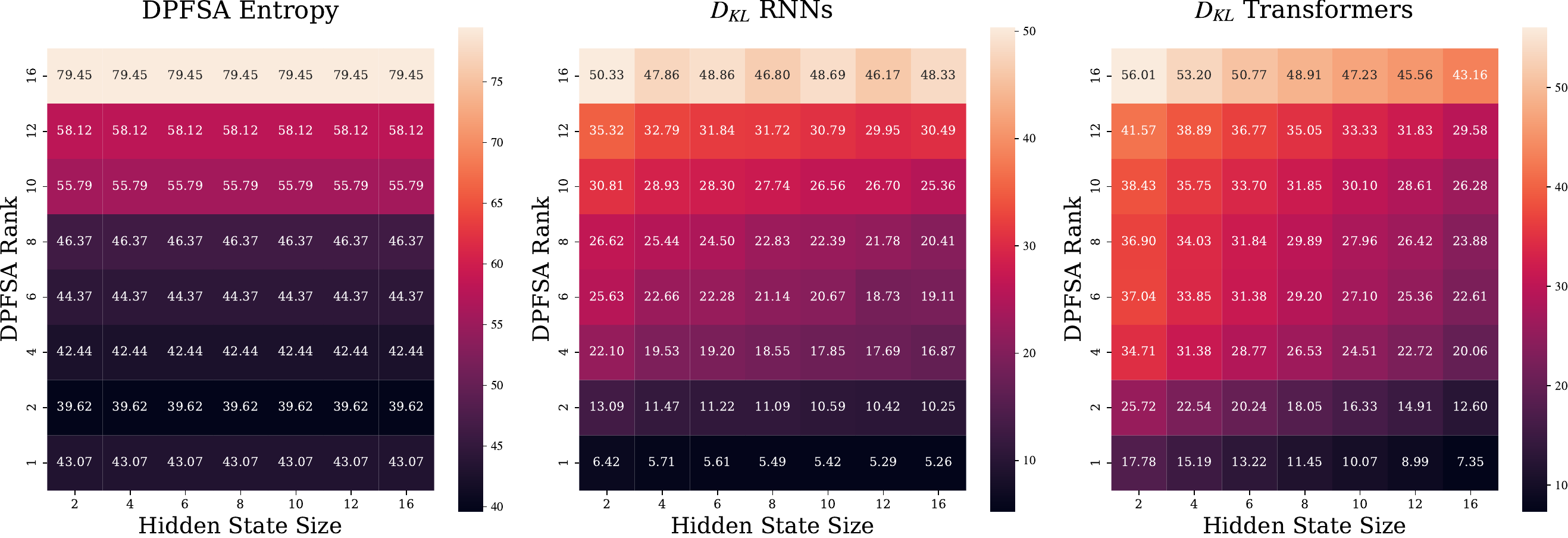}
    \caption{\dpfsaAcr{}'s entropy $\entropy$ and the $\KL$ between the neural LMs and the \dpfsaAcr{}s (in bits) as a function of $\hiddDim$ and $\pfsaRank$.}
    \label{fig:rank_outout_dim}
\end{figure*}

\subsection{Comparing RNN and Transformer LMs}
The linear models revealed an overall similar pattern of effects between RNNs and Transformers, albeit with few notable disparities. 
First, note the overall performance of each neural LM, as indicated by the model intercept for KL divergence---see \Cref{tab:rnns_coefficients}, \cref{tab:Transformers_coefficients}, and \cref{fig:general_performance} in \Cref{app:additional_results}. The model intercept reflects the predicted KL divergence when all other ($z$-transformed) predictors are equal to 0.
RNNs tend to outperform Transformers in this task (RNNs: $\widehat{\beta_0} = 8.72$, Transformers: $\widehat{\beta_0} = 13.5$), demonstrating lower average loss. This difference in performance could be attributed to two main factors: 1) As previous research has shown, RNNs are better suited to modeling formal languages \citep{deletang2023neural}, and 2) Transformers necessitate careful training involving language-specific hyperparameter tuning, which poses a severe computational challenge. 
Nevertheless, we expect that the trend observed here would persist even if the Transformers were trained under optimal conditions. 

While different in magnitude, the number of states, rank, expected string length, \dpfsaAcr{} entropy, and hidden state size all shared the same direction in their influence on KL divergence for each of the RNN and Transformer outputs. Of these, the biggest disparity in magnitude is in the influence of the expected string length and \dpfsaAcr{} entropy, which were significantly larger for Transformers compared to RNNs. Finally, not all predictors reached significance for both RNNs and Transformers. For instance, the number of symbols, number of transitions, and the upper bound of $\pfsaRank$ were significant predictors of KL divergence for Transformers, but not for RNNs.\looseness=-1

As shown in Figs. \ref{fig:rank_states}, \ref{fig:symbols_rank}, and
\ref{fig:rank_outout_dim}, the observed trends in the \dpfsaAcr{}s' entropy, visualized alongside the neural LMs $\KL$, suggest that the entropies alone cannot solely account for the observed effect in $\KL$. 
In other words, larger $\KL$ values are not fully explained by greater entropy values, highlighting the significance of the rank $\pfsaRank$ on learnability even more clearly. Particularly notable is \Cref{fig:rank_outout_dim}, which presents similar patterns to those observed in \Cref{tab:rnns_coefficients} and \Cref{tab:Transformers_coefficients}. The KL divergence of both model types increases with an increase in the \pfsaAcr{} rank and decreases with an increase in the hidden state size. While somewhat noisier, \Cref{fig:rank_states} and \Cref{fig:symbols_rank} further demonstrate the weaker influence of the number of states and number of symbols on the KL divergence.\looseness=-1

\section{Discussion}
\paragraph{Implications of \cref{thm:rank}.}
\cref{thm:rank} concretely quantifies the minimum size of the representation space of \emph{any} neural LM required for the correct representation of regular LMs. 
While this is special case of the so-called \emph{softmax bottleneck} principle \citep{chang-mccallum-2022-softmax}, it is, to the best of our knowledge, the first result connecting the principle to formal models of computation.
Practical implementations of \fslmAcr{}s might use state spaces and alphabets of sizes ranging from tens of thousands to hundreds of thousands \citep{mohririley}, which is much larger than the representations used by most modern neural LMs, which tend to be in the order of a few thousand dimensions \citep{groeneveld2024olmo}.
The high performance of much smaller neural LMs on similar datasets indicates that those LMs are indeed low-rank and can thus be approximated well using smaller hidden representations.
Nevertheless, \cref{thm:rank} provides an interesting limitation on what distributions neural LMs of finite size can represent and points out the limitations of parameter sharing in representing formal models of computation; while neural LMs are good at approximating such models of computation, their inability to represent them \emph{exactly} implies that, with increasing string lengths, their errors will unavoidably accumulate.
This leads to poor length generalization often observed in prior work \citep{weiss-etal-2018-practical,suzgun-etal-2019-evaluating,bhattamishra-etal-2020-ability,deletang2023neural}.

\paragraph{Takeaways from the empirical results.}

The empirical results in \cref{sec:results} complement the theoretical discussion from \cref{sec:theory} and the growing field of literature characterizing the representational capacity of neural LMs.
In line with the theoretical setting and in contrast to related work, our approach directly evaluates the KL divergence between neural LMs and \dpfsaAcr{}s, instead of relying on classification or next-token prediction accuracy measures. 
Comparing distributions over strings offers a more holistic view of a neural LM's overall ability to emulate \dpfsaAcr{}s allowing us to provide compelling insights into what aspects of distributions affect the learnability of formal LMs by controlling for various properties of the \dpfsaAcr{}s being learned.
Neatly, the observed effects of the rank and the neural LM hidden state size on KL divergence align with the theoretical results derived in \cref{thm:rank}: Namely, as the rank of a \dpfsaAcr{} grows, a larger hidden state is required in the neural LM to model the \dpfsaAcr{}'s language appropriately. The dependence of the performance on the rank of the \dpfsaAcr{} demonstrates the utility of formal language theory in providing interpretable insights into the learning abilities of neural LMs.

Another noteworthy observation is the negative influence of the \dpfsaAcr{} entropy on the KL divergence, indicating that neural LMs tend to better learn \fslmAcr{}s with higher intrinsic randomness. A possible explanation for this counter-intuitive result is that when a \dpfsaAcr{} is truly random, its simulation is trivial. However, it is more challenging to accurately learn \fslmAcr{}s with more underlying latent structures.\looseness=-1

\paragraph{Extensions.}
We focus on the learnability of deterministic \pfsaAcr{}s. 
This makes the theoretical results from \cref{sec:theory} particularly interpretable. 
Extensions to the non-deterministic automata, however, are an interesting next step.
Note that in this case, the \pfsaAcr rank analysis is slightly more nuanced.
A non-deterministic \pfsaAcr can, at any point, be in any of the $\nstates$ states (with a probability distribution over them), meaning that the probability of the next symbol is a convex combination of the individual conditional probability distributions (not their logits). 
This makes the analysis trickier and less interpretable; we leave it for future work to make the current exposition more concise.
A further interesting follow-up is also the study of the learnability of (deterministic) \emph{context-free} LMs represented by probabilistic pushdown automata (PPDAs).
PPDAs augment \pfsaAcr{}s by implementing a stack that gives the automaton infinitely many configurations.
Despite the infinitely many configurations, controlling for their rank analogously to the rank of a \pfsaAcr could elucidate how efficiently they are representable by neural LMs.

\section{Conclusion}
We provide a comprehensive empirical study of the learnability of \dpfsaAcr{}s by neural LMs.
More concretely, we investigate how well RNN and Transformer LMs learn to match the distributions over strings generated by \dpfsaAcr{}s of varying complexity. 
To this end, we operationalized learning difficulty using the KL divergence between such distributions over strings; this measure holistically captures the similarity between the neural LM and the targeted regular LM.
In an regression analysis of KL divergence variation, several complexity parameters of the \dpfsaAcr{} reached significance, along with the hidden state size of the neural LM. 
Of note were the \dpfsaAcr{} rank and the expected length of sampled strings, which corresponded to significant increases in KL divergence across the RNN and Transformer results.
We establish that for equivalence, a neural LM's hidden state size is theoretically lower-bounded by the \dpfsaAcr{}'s rank.
This is consistent with the results of our controlled experiments. 
Our results demonstrate that using probabilistic formal language theory can help us generate insights into what neural LMs can and cannot learn.
Nevertheless, our findings call for further theoretical investigations closer to practical applications.\looseness=-1

\section*{Limitations}
We point out some limitations of the presented study.
First and foremost, the finite-state automata considered in this paper are limited by their bounded memory.
Additionally, to keep our work concise and results self-contained, we focus only on deterministic \dpfsaAcr{}s. 
Similar and more comprehensive investigations could of course include non-deterministic automata and languages higher up on the Chomsky hierarchy, such as context-free LMs, or even context-sensitive LMs. 
Our experiments also omit the effect of training dataset size, which might be an interesting quantity to consider when training neural LMs.
We leave those considerations to future work.\looseness=-1

Moreover, due to computational constraints and the substantial computation load imposed by our experiments, we could not fine-tune our models with language-specific hyperparameters, which are particularly important for transformers. 
For the same reason, we had to refrain from optimizing larger and more capable models. 
However, we believe that this should not impair the validity of our results, as the trend we observed would hold even with optimal training.\looseness-1


\section*{Acknowledgements}
Ryan Cotterell acknowledges support from the Swiss National Science Foundation (SNSF) as part of the ``The Forgotten Role of Inductive Bias in Interpretability'' project.
Anej Svete is supported by the ETH AI Center Doctoral Fellowship.
Josef Valvoda is funded by the Nordic Programme for Interdisciplinary Research Grant 105178 and the Danish National Research Foundation Grant no. DNRF169.
This research was partially funded by a DFF Sapere Aude research leader grant under grant agreement No 0171-00034B, the Danish-Israeli Study Foundation in Memory of Josef and Regine Nachemsohn, and the Privacy Black \& White project, a UCPH Data+ Grant. This work was further supported by the Pioneer Centre for AI, DNRF grant number P1.

\bibliography{anthology,anthology_p2,custom}
\bibliographystyle{acl_natbib}

\newpage

\onecolumn

\appendix

\section{Additional Related Work}
\label{app:related_work}

\subsection{Representational Capacity of Neural LMs}
Much of the theoretical work has investigated the representational capacity of various neural LM architectures \citep{MerrillBlackBox,strobl2023transformers}. 
Regular languages (and, to a lesser extent, regular LMs) have been linked to neural LMs particularly often, especially to recurrent neural LMs, but similar connections have also been made for Transformers \citep{merrill-2019-sequential,merrill-etal-2020-formal,liu2023transformers}.
Distinctively interesting are the bounds on the space requirements for emulating FSAs \citep{Dewdney1977,Indyk95,hewitt-etal-2020-rnns,svete2023recurrent}.
This work bridges the theoretical work with practice, tests its applicability, and uses its insights for an informed study of the practical representational capacity of neural LMs.\looseness=-1

\subsection{Learning Formal Languages}
Work similar to ours in spirit is that of \citet{jumelet-zuidema-2023-transparency}, where the authors train and evaluate neural LMs with probabilistic context-free grammars.
They use the underlying data-generating distribution (the probabilistic grammar) to evaluate how well the model has learned the distribution.
Moreover, the knowledge of grammar allows them to probe the model for the encodings of individual constituents, similar to how we probe for the states of the automaton.
In contrast to our work, however, \citet{jumelet-zuidema-2023-transparency} focus on learning human-language-based grammars, which do not provide a holistic picture of the representability of general formal LMs by neural LMs.

\citet{deletang2023neural} provide a comprehensive survey of the learnability of diverse formal languages.
Unlike us, they focus on learning discrete languages, particularly from the perspective of learning \emph{algorithms} and investigating LMs' inductive biases.
They formulate this as a \emph{transduction}---a string-to-string mapping.
They arrive at interesting results showing that popular neural LMs are hard to place on the standard Chomsky hierarchy of languages.
This can partly be explained by the mismatch of the training task---transduction---and the probabilistic nature of a neural LM, since the probabilistic Chomsky hierarchy is known to differ from the discrete one \citep{ICARD2020102308}.
In contrast to our work, \citet{deletang2023neural} also only consider a limited set of hand-picked languages which, while providing algorithmic insights into how LMs work, do not extensively probe the learnability of the language classes.

Testing the compositional generalization of NNs, \citet{valvoda-etal-2022-benchmarking} sample an infinite number of finite languages.
Thereby they can draw conclusions about the learnability of an entire class of languages---sub-regular ones encoded by subsequential finite state transducers.
Their work connects Montague's theory of compositional generalization \cite{montague-1970-universal} with the popular SCAN benchmark of compositional behavior \cite{pmlr-v80-lake18a}.
Unlike our work, they investigate deterministic transducers and seq2seq models.

Another similar work is that of \citet{white-cotterell-2021-examining}, who use artificial languages to identify the biases of neural LMs.
By modifying a base grammar, they experiment with the learnability of 64 languages.
Unlike us, their work focuses solely on topological aspects of the language, which limits their findings to observations over the word order.

In a different line of work, \citet{akyürek2024incontext} evaluate neural LMs' abilities to learn regular languages \emph{in context}.
Rather than learning one particular distribution from the training dataset, they train neural LMs to model the language of any finite-state automaton given a number of samples from it---that is, to infer the generating mechanism from the context.
They consider only discrete languages (even though their generative setup is probabilistic) and due to the in-context learning setting, they do not analyze the dynamics of the neural LM implementing individual languages.

\section{Probabilistic Finite-state Automata} \label{sec:pfsas}
We begin by more formally defining the notion of probabilistic finite-state automata (\pfsaAcr{}s), which were only informally introduced in \cref{sec:preliminaries}.

\begin{definition}\label{def:stochastic-wfsa}
    A \defn{probabilistic finite-state automaton} (\pfsaAcr{}) is a 5-tuple $\wfsatuple$ where $\alphabet$ is an alphabet, $\states$ a finite set of states, $\trans \subseteq \states \times \alphabet \times \left[0, 1\right] \times \states$ a finite set of weighted transitions and $\initf, \finalf\colon \states \rightarrow \left[0, 1\right]$ the initial and final weighting functions.
    Moreover, $\trans, \initf$ and $\finalf$ are required to satisfy that $\sum_{\stateq \in \states} \initf\left(\stateq\right) = 1$, and, for all $\stateq \in \states$, $\sum_{\left(\stateq, \sym, w, \stateq'\right) \in \trans} w + \finalf\left(\stateq\right) = 1$.
We denote $\left(\stateq, \sym, w, \stateq'\right) \in \trans$ with $\edge{\stateq}{\sym}{w}{\stateq'}$.
\end{definition}

\begin{definition}
    A \defn{path} $\apath$ in a $\pfsaAcr$ $\wfsa$ is a string of consecutive transitions
    $\edge{\stateq_1}{\sym_1}{w_1}{\stateq_2}, \cdots, \edge{\stateq_{\pathlen}}{\sym_{\pathlen}}{w_{\pathlen}}{\stateq_{\pathlen + 1}}$.
    A path $\apath$'s \defn{length} $|\apath|$ is the number of transitions in it and its \defn{scan} $\yield\left(\apath\right)$ the concatenation of the symbols on them.
    We denote with $\paths(\automaton)$ the set of all paths in $\automaton$ and with $\paths(\automaton, \str)$ the set of all paths that scan $\str \in \kleene{\alphabet}$.\looseness=-1
\end{definition}

The weights of the transitions along a path are multiplicatively combined to form the weight of the path.
The weights of all the paths scanning the same string are combined additively to form the weight of that string.
\begin{definition}
    The \defn{path weight} of $\apath \in \paths(\automaton)$ is $\weight\left(\apath\right) = \initf \left( \stateq_1 \right) \left[\prod_{\idx = 1}^\pathlen w_\idx\right] \finalf \left( \stateq_{\pathlen + 1} \right)$.
    The \defn{stringsum} of $\str \in \kleene{\alphabet}$
    is $\automaton \left( \str \right) \defeq \sum_{\apath \in \paths\left( \automaton, \str \right) }  \weight \left( \apath \right)$.
\end{definition}
It is easy to see that the final weights $\finalfFun{\stateq}$ play an analogous role to the $\eos$ symbol in the context of autoregressive LMs---they both correspond to the probabilities of ending the generation of the string.

\begin{definition} \label{def:fsa-deterministic}
    A \pfsaAcr $\automaton = \wfsatuple$ is \defn{deterministic} if $|\set{\stateq \mid \initfFun{\stateq} > 0}| = 1$ and, for every $\stateq \in \states, \sym \in \alphabet$, there is at most one $\stateq^\prime \in \states$ such that $\edge{\stateq}{\sym}{w}{\stateq^\prime} \in \trans$ with $w > 0$.
\end{definition}

In general, there can be infinitely many \pfsaAcr{}s that define a given FSLM.
However, in the deterministic case, there is a unique minimal \dpfsaAcr.
\begin{definition}
    A \dpfsaAcr $\wfsa = \wfsatuple$ is \defn{minimal} for the FSLM $\pLM$ if there is no equivalent \dpfsaAcr $\wfsa^\prime = \left(\alphabet, \states^\prime, \initf^\prime, \finalf^\prime, \trans^\prime\right)$ with $|\states^\prime| < |\states|$.\looseness=-1
\end{definition}

\section{Proofs of Theoretical Results} \label{sec:proofs}

\minSizeThm*
\begin{proof}
This proof is effectively a restatement of the softmax bottleneck \citep{yang2018breaking}.
Because the language model $\pLM$ has full support, we may transform it into a softmax-normalized, representation-based LM. 
By supposition, after this transformation, $\pLM$'s output matrix $\transMtx$, as in \cref{sec:softmax-dpfsa}, is of rank $\pfsaRank$.
For $\qLM$ to be equivalent to $\pLM$, it has to hold that
\begin{equation}
        \softmaxFun{\outMtx \hiddState(\str)}{} = \softmaxFun{\transMtx \hiddState_\automaton(\str)}{}, \label{eq:softmax-equality}
\end{equation}
for all $\str \in \kleene{\alphabet}$ where $\outMtx$'s $\qLM$'s output matrix.
Next, define
\begin{subequations}
    \begin{align}
        Z_1(\str) &  \defeq  \sum_{\eossym \in \eosalphabet} \exp((\outMtx \hiddState(\str))_{\eossym}), \\
        Z_2(\str) &\defeq \sum_{\eossym \in \eosalphabet} \exp((\transMtx \hiddState_\automaton(\str))_{\eossym}).
    \end{align}
\end{subequations} 
For all $\eossym \in \eosalphabet$, due to the additive invariance property of the softmax, we have the following


\begin{subequations}
\begin{align}
     \softmaxFun{\outMtx \hiddState(\str) - \log Z_1(\str) \cdot \ones }{\eossym} 
     &= \exp(\outMtx \hiddState(\str) - \log Z_1(\str) \cdot  \ones )_{\eossym} \\
     &= \exp(\transMtx \hiddState_\automaton(\str) - \log Z_2(\str) \cdot \ones )_{\eossym} \\
     &= \softmaxFun{\transMtx \hiddState_\automaton(\str) - \log Z_2(\str) \cdot \ones }{\eossym},
\end{align}
\end{subequations}
where $\ones$ is the vector of all ones $\in \R^{|\eosalphabet|}$. 
It follows that, by taking $\log$s, we have
\begin{equation}\label{eq:inside-exp-equality_1}
    \outMtx \hiddState(\str) - \log Z_1(\str) \cdot \ones = \transMtx \hiddState_\automaton(\str) - \log Z_2(\str) \cdot \ones,
\end{equation}
and, furthermore, that
\begin{equation}\label{eq:inside-exp-equality_2}
    \outMtx \hiddState(\str) = \transMtx \hiddState_\automaton(\str) + \log \frac{Z_1(\str)}{Z_2(\str)} \cdot \ones.
\end{equation}
From \cref{eq:inside-exp-equality_2}, it follows that
\begin{equation}\label{eq:first-span-equality}
\mathspan\left(\{\transMtx \hiddState_{\automaton}(\str) + \log \frac{Z_1(\str)}{Z_2(\str)} \cdot \ones \mid \str \in \kleene{\alphabet} \}\right) = \mathspan\left(\{\outMtx \hiddState(\str) \mid \str \in \kleene{\alphabet} \}\right).
\end{equation}
Noting that $\mathspan\left(\{\log \frac{Z_2(\str)}{Z_1(\str)} \cdot \ones \mid \str \in \kleene{\alphabet}\}\right) = \mathspan(\{ \ones \})$, we can write
\begin{equation}\label{eq:second-span-equality}
\mathspan\left(\{\transMtx \hiddState_{\automaton}(\str) \mid \str \in \kleene{\alphabet} \}\right) \oplus \mathspan\left( \{ \ones \}\right)  = \mathspan\left(\{\outMtx \hiddState(\str) \mid \str \in \kleene{\alphabet} \}\right),
\end{equation}
where $\oplus$ is the direct sum, and we note that $\dim(\mathspan\left(\{\transMtx \hiddState_{\automaton}(\str) \mid \str \in \kleene{\alphabet} \}\right)) = \pfsaRank$ exactly because $\automaton$'s minimality implies $\dim(\mathspan\left(\{ \hiddState_{\automaton}(\str) \mid \str \in \kleene{\alphabet} \}\right)) = \nstates$.
Next \cref{eq:second-span-equality}, in turn, implies
\begin{equation}
\underbrace{\mathspan\left(\{\transMtx \hiddState_{\automaton}(\str) \mid \str \in \kleene{\alphabet} \}\right)}_{\text{dimensionality } \pfsaRank} \subseteq \mathspan\left(\{\outMtx \hiddState(\str) \mid \str \in \kleene{\alphabet} \}\right).
\end{equation}
Thus, we arrive at a lower bound on the rank of $\outMtx$.
Why is it a lower bound? Because for an arbitrary representation-based LM, we do not know $\dim\left(\mathspan\left(\{\hiddState(\str) \mid \str \in \kleene{\alphabet}\}\right)\right)$.
However, we do know it is lower-bounded by $\dim\left(\mathspan\left(\{\outMtx \hiddState(\str) \mid \str \in \kleene{\alphabet} \}\right)\right)$.
Thus, $\rankFun{\outMtx} \geq \pfsaRank$.
Because $\outMtx$'s rank is bounded above by $\hiddDim$, we have $\hiddDim \geq \pfsaRank$, as desired.
\end{proof}




\section{Experimental Details} \label{sec:experimental-details}

\subsection{Sampling \dpfsaAcr{}s of varying complexity}
\label{app:sampling_pdfas}

The \dpfsaAcr{}s we used in our experiments were sampled with $\nstates \in \left\{2, 4, 6, 8, 10, 12, 16\right\}$ over alphabets alphabets of sizes $\nsymbols \in \left\{2, 4, 6, 8, 10, 12, 16\right\}$. Given a sampled \dpfsaAcr{} $\wfsa$ with $\nstates$ states over an alphabet $\alphabet$, we randomly set its unweighted transition function. That is, for each $\stateq \in \states$ and $\sym \in \alphabet$ we randomly choose $\stateq{}' \in \states$ and add the transition $\edgenoweight{\stateq}{\sym}{\stateq{}'}$ to $\wfsa$.

We add weights to the transition function of $\wfsa$ as follows. We generate a random matrix $\transMtx \in \R^{|\eosalphabet| \times \nstates}, \transMtx_{i,j} \sim \mathcal{N}(\mu=0,\,\sigma^{2}=4)\,$, and define $\pfsaRank{}_{\max} = \rankFun{\transMtx}$ (note that $\pfsaRank{}_{\max} \le \min(\nstates, \nsymbols)$). For each $\pfsaRank \in \left\{r \mid r \le \min\left(\nstates, \nsymbols\right), r \in \{1, 2, 4, 6, 8, 10, 12, 16\} \right\}$, we define $\transMtx^\pfsaRank \defeq \text{SVD}\left(\transMtx, \pfsaRank\right)$, where $\text{SVD}\left(\transMtx, \pfsaRank\right)$ is the operation of reducing the rank of $\transMtx$ to $\pfsaRank$ using SVD. Next, we add weights to the transition function of $\wfsa$ by replacing each unweighted transition $\edgenoweight{\stateq}{\sym}{\stateq{}'}$ with $\edge{\stateq}{\sym}{\weight_{\stateq, \sym}}{\stateq{}'}$, where $\weight_{\stateq, \sym} = \softmaxFun{\transMtx{}^\pfsaRank{}_{:, \stateq}}{\sym}$. Finally, we set $\finalf \left( \stateq \right) = \softmaxFun{\transMtx{}^{\pfsaRank{}}_{:, \stateq}}{\eos{}}$. This process results with the generation of up to eight\footnote{Note that we have $|\left\{r \mid r \le \min\left(\nstates, \nsymbols\right), r \in \{1, 2, 4, 6, 8, 10, 12, 16\} \right\}| \le 8$.} random \dpfsaAcr{}s, all sharing the same $\states, \alphabet$ and underlying transition function. They differ, however, in the rank of the matrix $\transMtx{}^\pfsaRank{}$ that defines the weights of the transitions. 

As the Transformer LMs have a limited context length, it is undesirable to sample \dpfsaAcr{}s that are likely to generate strings longer than the context length. 
This may unfairly hinder the Transformer LMs ability to learn such languages.
We, therefore, filter out \dpfsaAcr{}s with an expected string length larger than the median value of expected string lengths (in practice, 46 symbols), i.e., half of the \dpfsaAcr{}s. 
See \Cref{app:evaluation_setup} for details about how the expected generation length of \dpfsaAcr{}s is calculated.

\subsection{Generating the Data}
\label{app:data_generation}

\begin{figure}[t]
    \centering
    \includegraphics[width=0.5\columnwidth]{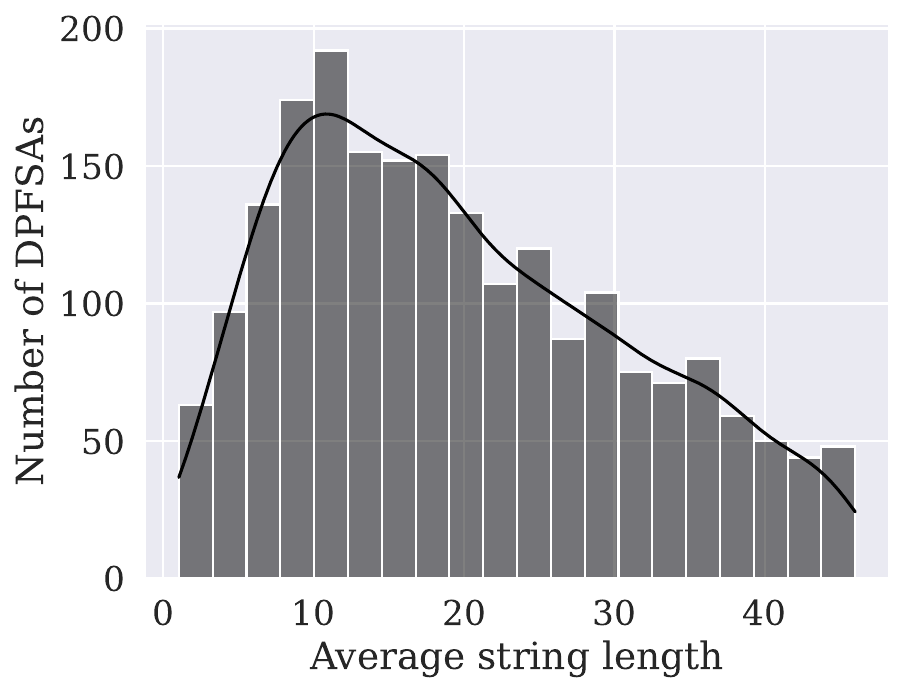}
    \caption{The statistics of the training dataset.}
    \label{fig:data-stats}
\end{figure}

For a given \dpfsaAcr{} $\wfsa$, we sample 20k random strings, terminating the generation process of each string when \eos{} is sampled. We divide the dataset into train and test splits, such that no string is shared between the sets, and the test set has at least 2k strings. We truncate the strings to $256$ symbols to accommodate the limited context length of the Transformer model we used. \Cref{fig:data-stats} shows a histogram of the average length of strings generated for each \dpfsaAcr{}.

\subsection{Training the Neural LMs}

We train neural LMs on our dataset using the following procedure, repeated $2100$ times:
\begin{enumerate}[itemsep=2pt]
    \item Sample a random \dpfsaAcr{} $\wfsa$ with $\nstates$, $\nsymbols$ and rank $\pfsaRank$ using the process described in \Cref{app:sampling_pdfas}.
    \item Sample 20k strings from $\wfsa$ and split them to train set and test set using the process described in \Cref{app:data_generation}. 
    \item For each $\hiddDim \in \left\{2, 4, 6, 8, 10, 12, 16\right\}$ train an RNN on the train set strings.
    \item For each $\hiddDim \in \left\{2, 4, 6, 8, 10, 12, 16\right\}$ train a Transformer model on the train set strings.
    \item Compute the $\KL$ between $\wfsa$ and the trained neural LMs on the test set strings.
\end{enumerate}

We train the RNN and Transformer models using the following hyperparameters:
\begin{itemize}[itemsep=1pt]
    \item \textbf{RNNs}: We use a unidirectional LSTM with four hidden layers, each with $64$-dimensional hidden states and an embedding size of $64$. We trained each model for two epochs using a batch size of $32$ and a learning rate of $0.001$, an Adam optimizer with default settings, and a standard cross-entropy loss \citep{kingma2014adam}. We did not tie the weights of the word embeddings.
    \item \textbf{Transformers}: We use the GPT-2 model architecture \cite{radford2019language} with six attention layers, each with four attention heads and $128$-dimensional representations. We use an embedding size of $64$ and an input context length of $256$. We trained each model for two epochs using a batch size of $32$ and a learning rate of $0.001$, an AdamW \citep{loshchilov2018fixing} optimizer with default settings, and a standard cross-entropy loss. We did not tie the weights of the word embeddings.
\end{itemize}

\subsection{Derivations} 
\label{app:evaluation_setup}

\subsubsection{Entropy of a \dpfsaAcr{}} 
Let $\wfsa = \wfsatuple$ be a softmax-normalized \dpfsaAcr{} 
with the parameters matrix $\transMtx \in \R^{|\eosalphabet| \times \nstates}$.
The entropy of $\wfsa$ is defined as 
\begin{equation} \label{eq:ent-defn}
\entropy\left(\wfsa\right)  \defeq - \sum_{\str \in \kleene{\alphabet}} \automaton \left( \str \right) \log \automaton \left( \str \right).
\end{equation}
\citet[Thm. 4.2]{grenander1967syntax} provides an efficient method of computing $\entropy\left(\wfsa\right)$, which we summarize in the following, based on \citeposs{sanchez-etal-2018-derivational} exposition. 

In deterministic \pfsaAcr{}s, \cref{eq:ent-defn} can equivalently be computed by summing over the \emph{paths} in $\wfsa$, $\paths(\automaton)$, rather than the strings $\str \in \kleene{\alphabet}$; as $\wfsa$ is deterministic, there exists a single path $\apath$ in $\paths(\automaton, \str)$ for every $\str \in \kleene{\alphabet}$. 
We can then derive
\begin{subequations}
\begin{align}
\entropy\left(\wfsa\right) &  = - \sum_{\str \in \kleene{\alphabet}} \automaton \left( \str \right) \log \automaton \left( \str \right) \\
& = - \sum_{\apath \in \paths(\automaton)} \weight \left( \apath \right) \log \weight \left( \apath \right).
\end{align}
\end{subequations}
Now, let $\tranMtx \in \R^{\nstates \times \nstates}$ be the matrix of transition probabilities between each two states.
That is, $\tranMtx_{i,j}$ is the probability of transitioning from state $\stateq_i \in \states$ to state $\stateq_j \in \states$ in $\wfsa$, and it is computed as 
\begin{equation} \label{eq:M-mtx}
    \tranMtx_{i,j} \defeq \sum_{\sym \in \alphabet}  \ind{\edge{\stateq_i}{\sym}{w}{\stateq_j} \in \trans} \cdot \softmaxFun{\transMtx_{:, \stateq_{i}}}{\sym},
\end{equation}
where $\ind{\edge{\stateq_i}{\sym}{w}{\stateq_j} \in \trans}$ is the indicator function of whether the transition $\edge{\stateq_i}{\sym}{w}{\stateq_j}$ exists in $\wfsa$.\footnote{$\tranMtx$ together with a column of final weights is the stochastic matrix describing $\wfsa$, since each row contains the probabilities of emitting any of the symbols or stopping the generation.}
Similarly, let $\valpha \in \R^{\nstates}$ be defined such that $\valpha_i \defeq \initf(\stateq_i)$, i.e., the probability of starting a generation at state $\stateq_i$.
Finally, let $\vxi \in \R^{\nstates}$ be defined as
\begin{equation} \label{eq:xi}
    \vxi_{i} \defeq - \sum_{\sym \in \eosalphabet} \softmaxFun{\transMtx_{:, \stateq_i}}{\sym} \log \softmaxFun{\transMtx_{:, \stateq_i}}{\sym}.
\end{equation}
This allows us to restate \citet[Thm. 4.2]{grenander1967syntax}.
\begin{theorem}[Entropy of a \dpfsaAcr{}]
    Given the definitions in \cref{eq:M-mtx,eq:xi}, $\entropy\left(\wfsa\right)$ can be computed using the following expression\looseness=-1
    \begin{align} \label{eq:entropy-expression}
        \entropy\left(\wfsa\right) = \valpha^{\top} \left(\mI - \tranMtx\right)^{-1} \vxi.
    \end{align}
\end{theorem}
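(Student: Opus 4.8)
The plan is to use determinism to replace the sum over strings by a sum over paths, set up a linear recursion for the entropy ``starting from'' each state, and then solve it with a Neumann series. Concretely: since $\wfsa$ is deterministic, every $\str \in \kleene{\alphabet}$ has a unique path in $\paths(\automaton, \str)$ whose first state is the unique $\stateq$ with $\initfFun{\stateq} > 0$, so the string distribution equals the path distribution and $\entropy(\wfsa) = -\sum_{\apath \in \paths(\automaton)} \weight(\apath)\log\weight(\apath)$, as already observed above. For each $\stateq_i \in \states$ I let $H_i$ be the entropy of the distribution over strings \emph{generated starting from $\stateq_i$} (that is, of the \dpfsaAcr{} obtained by making $\stateq_i$ the sole initial state with initial weight $1$), and I collect the $H_i$ into a vector $\vH$. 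Since $\valpha$ is one-hot in the deterministic case, the target quantity is $\entropy(\wfsa) = \valpha^\top \vH$, so it suffices to determine $\vH$.

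Next I would establish the recursion $\vH = \vxi + \tranMtx\vH$ by applying the chain rule for entropy to the first generation step from $\stateq_i$. Generating a string from $\stateq_i$ amounts to first drawing a symbol of $\eosalphabet$ according to $\softmaxFun{\transMtx_{:,\stateq_i}}{\cdot}$: if it is $\eos$ the string is complete, and if it is $\sym \in \alphabet$ generation continues deterministically from the state $\stateq_j$ with $\edge{\stateq_i}{\sym}{w}{\stateq_j} \in \trans$. The entropy of this first choice is exactly $\vxi_i$ (recall \cref{eq:xi}), and the expected entropy of the remainder is the $\softmaxFun{\transMtx_{:,\stateq_i}}{\cdot}$-weighted sum of the $H_j$ over destination states, which regroups into $(\tranMtx\vH)_i$ by the definition of $\tranMtx$ in \cref{eq:M-mtx}. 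Rearranging $H_i = \vxi_i + (\tranMtx\vH)_i$ gives $(\mI - \tranMtx)\vH = \vxi$, hence $\vH = (\mI - \tranMtx)^{-1}\vxi$ and $\entropy(\wfsa) = \valpha^\top(\mI - \tranMtx)^{-1}\vxi$, once invertibility is in hand.

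It then remains to justify that $\mI - \tranMtx$ is invertible. The matrix $\tranMtx$ is sub-stochastic, since row $i$ sums to $1 - \finalfFun{\stateq_i} \le 1$, and because $\wfsa$ induces a genuine probability distribution over $\kleene{\alphabet}$, generation terminates almost surely; restricting to the accessible, co-accessible part of $\wfsa$ (the only part carrying probability mass, and with the same entropy), this forces $\spectRad(\tranMtx) < 1$. Consequently $(\mI - \tranMtx)^{-1} = \sum_{k\ge 0}\tranMtx^k$ converges, each $H_i$ is finite, and this finiteness is what retroactively legitimizes the rearrangements in the previous paragraph.

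The main obstacle is the analytic bookkeeping rather than the algebra, which is routine. One must make the chain-rule decomposition rigorous over the countably infinite path space — absolute convergence, finiteness of each $H_i$, and the interchange of summations used to regroup transitions by destination state — and one must extract $\spectRad(\tranMtx) < 1$ from the hypothesis that $\wfsa$ is a normalized \dpfsaAcr{}. States from which $\eos$ is unreachable, which would make some $H_i$ infinite, are sidestepped by passing to the trim part of $\wfsa$, which leaves the string distribution, and hence $\entropy(\wfsa)$, unchanged.
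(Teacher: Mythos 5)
Your proof is correct. Note that the paper does not actually prove this theorem: it establishes only the reduction from the sum over strings to the sum over paths (using determinism) and then defers the identity itself to \citet[Thm.~4.2]{grenander1967syntax}. Your argument --- the same path reduction, followed by a first-step (chain-rule) decomposition of the per-state entropies $H_i = \vxi_i + (\tranMtx \vH)_i$ and a Neumann-series inversion of $\mI - \tranMtx$ --- is the standard derivation of that classical result and supplies exactly what the paper leaves to the citation. One simplification available in the paper's specific setting: because the \dpfsaAcr{}s here are softmax-normalized, every state satisfies $\finalf(\stateq) = \softmaxFun{\transMtx_{:,\stateq}}{\eos} > 0$, so every row of $\tranMtx$ sums to strictly less than $1$; this gives $\spectRad(\tranMtx) < 1$ directly, and the trimming step you invoke to handle states from which $\eos$ is unreachable is never needed. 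The remaining analytic bookkeeping you mention (absolute convergence and regrouping of sums) is unproblematic since all terms in the Neumann series are nonnegative, so Tonelli's theorem applies.
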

We use \cref{eq:entropy-expression} to compute the entropy of all our regular LMs.

\subsubsection{Expected String Length Under a \dpfsaAcr{}} 
To compute the expected length of a string under a \dpfsaAcr{}, we rely on the following lemma.
\begin{lemma}[Expected string length under a \dpfsaAcr{}]
    The expected length of strings generated by a \dpfsaAcr{} $\wfsa = \wfsatuple$ can be computed using the following identity
    \begin{equation}
        \underset{\apath \in \paths}{\E}\left[\pathlen(\apath) \right] = \sum_{\stateq \in \states} \left(\valpha^{\top} \left(\mI - \tranMtx\right)^{-1} \right)_{\stateq} - 1.
    \end{equation}
\end{lemma}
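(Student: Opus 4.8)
The plan is to read the generation process of a deterministic PFSA as a terminating random walk on its state set, and to identify the expected string length with (one less than) the expected number of states the walk occupies. Concretely, a \dpfsaAcr{} $\wfsa = \wfsatuple$ generates a string by drawing an initial state with probability $\valpha_\stateq = \initfFun{\stateq}$ and then repeatedly: from the current state $\stateq_i$ it emits $\sym$ and follows the unique transition $\edgenoweight{\stateq_i}{\sym}{\stateq_j}$ with probability $\softmaxFun{\transMtx_{:,\stateq_i}}{\sym}$, and halts with the remaining probability $\finalfFun{\stateq_i} = \softmaxFun{\transMtx_{:,\stateq_i}}{\eos}$. By determinism, each string of length $\pathlen$ corresponds to a unique path $\apath$ with $\pathlen(\apath)$ transitions, along which the walk occupies a state at times $0, 1, \ldots, \pathlen(\apath)$ --- that is, exactly $\pathlen(\apath) + 1$ state occupancies. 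Writing $V$ for the random number of occupancies, we therefore have $V = \pathlen(\apath) + 1$, hence $\E_{\apath \in \paths}[\pathlen(\apath)] = \E[V] - 1$, and it remains to compute $\E[V]$.

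For the second step, let $A_t$ be the event that the walk has not halted strictly before time $t$, so that it still occupies some state at time $t$; then $V = \sum_{t \ge 0} \ind{A_t}$. A one-step induction on $t$ using the definition of $\tranMtx$ in \cref{eq:M-mtx} --- whose entry $\tranMtx_{i,j}$ is precisely the probability of moving from $\stateq_i$ to $\stateq_j$, aggregated over all symbols whose transition does so --- shows that the probability of occupying state $\stateq_j$ at time $t$ equals $(\valpha^\top \tranMtx^t)_j$; summing over $j$ gives $\Pr(A_t) = \valpha^\top \tranMtx^t \ones$. By Tonelli's theorem (the summands are nonnegative),
\begin{equation}
  \E[V] \;=\; \sum_{t=0}^{\infty} \Pr(A_t) \;=\; \sum_{t=0}^{\infty} \valpha^\top \tranMtx^t \ones \;=\; \valpha^\top \Bigl( \sum_{t=0}^{\infty} \tranMtx^t \Bigr) \ones.
\end{equation}

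Finally, because $\wfsa$ defines a genuine probability distribution over $\kleene{\alphabet}$ --- equivalently, the \dpfsaAcr{}s we consider are tight, and in particular have finite expected length --- the substochastic matrix $\tranMtx$ has spectral radius strictly less than $1$, so $\mI - \tranMtx$ is invertible and its Neumann series converges: $\sum_{t \ge 0} \tranMtx^t = (\mI - \tranMtx)^{-1}$. Substituting into the display above and combining with $\E[\pathlen(\apath)] = \E[V] - 1$ yields
\begin{equation}
  \underset{\apath \in \paths}{\E}\left[\pathlen(\apath)\right] \;=\; \valpha^\top (\mI - \tranMtx)^{-1} \ones - 1 \;=\; \sum_{\stateq \in \states} \left( \valpha^\top (\mI - \tranMtx)^{-1} \right)_{\stateq} - 1,
\end{equation}
which is the claimed identity.

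The only non-routine ingredient is the last step --- justifying that $\mI - \tranMtx$ is invertible and that the geometric operator series converges --- which is exactly where properness/tightness of the \dpfsaAcr{} enters; a substochastic $\tranMtx$ need not have spectral radius below $1$ for a non-tight automaton. Everything else is bookkeeping about a terminating Markov chain. It is worth noting the parallel with the entropy formula \cref{eq:entropy-expression}: both are linear functionals of the same fundamental matrix $(\mI - \tranMtx)^{-1}$ of the \dpfsaAcr{}, the entropy weighting state $\stateq$ by its local next-symbol entropy $\vxi_\stateq$ and the expected length weighting each state by $1$.
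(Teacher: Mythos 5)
Your proof is correct and follows essentially the same route as the paper's: both expand $\left(\mI - \tranMtx\right)^{-1}$ as the Neumann series, interpret $\sum_{\stateq}\left(\valpha^{\top}\tranMtx^{\idx}\right)_{\stateq}$ as the probability that the generated string has length at least $\idx$, and apply the tail-sum identity $\sum_{\idx \geq 0} \Pr\left(L \geq \idx\right) = 1 + \E\left[L\right]$ (which you package as counting state occupancies $V = L + 1$). The only difference is that you explicitly justify the invertibility of $\mI - \tranMtx$ and the convergence of the series via tightness of the \dpfsaAcr{}, a point the paper's proof passes over silently.
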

\begin{proof}
The following derivation proves the result
    \begin{subequations}
        \begin{align}
            \sum_{\stateq \in \states} \left(\valpha^{\top} \left(\mI - \tranMtx\right)^{-1} \right)_{\stateq} 
            &= \sum_{\stateq \in \states} \left(\valpha^{\top} \left(\sum_{\idx = 0}^{\infty} \tranMtx^{\idx}\right)\right)_{\stateq} \\
            & = \sum_{\stateq \in \states} \left[\sum_{\idx = 0}^{\infty} \left(\valpha^{\top}\tranMtx^{\idx}\right)_{\stateq}\right] \\
            & = \sum_{\idx = 0}^{\infty} \left[\sum_{\stateq \in \states} \left(\valpha^{\top}\tranMtx^{\idx}\right)_{\stateq} \right].
        \end{align}
    \end{subequations}
    Notice that $\left(\valpha^{\top}\tranMtx^{\idx}\right)_{\stateq}$ is the total weight of paths in $\wfsa$ reaching the state $\stateq$, but does not necessarily \emph{terminate} there due to the absence of final weights $\finalf$, after exactly $\idx$ transitions, starting in initial states according to their initial weights \citep{malagutti2024role}.  
    It follows that $\sum_{i \in \states} \left(\valpha^{\top}\tranMtx^{\idx}\right)_i$ equals the weights of all paths (visiting in any state $\in \states$) that generate a string of length of \emph{at least} $\idx$:
    \begin{equation}
        \sum_{i \in \states} \left(\valpha^{\top}\tranMtx^{\idx}\right)_i = \wfsa\left(\set{\str \mid |\str| \geq k}\right).
    \end{equation}
    Let $p(\pathlen(\apath) = k\mid \wfsa)$ be the probability of the $\wfsa$ generating a string of length \emph{exactly} $k$, terminating the generation after $k$ transitions. 
    In other words, $p(\pathlen(\apath) = k\mid \wfsa)$ corresponds to the probability mass of the set $\set{\str \mid |\str| = k}$ under the \dpfsaAcr{}.
    By the reasoning above, 
    \begin{equation}
        \sum_{\stateq \in \states} \left(\valpha^{\top}\tranMtx^{\idx}\right)_{\stateq} = \sum_{k = \idx}^{\infty} p(\pathlen(\apath) = k \mid \wfsa),
    \end{equation}
    and
    \begin{subequations}
        \begin{align}
            \sum_{\idx = 0}^{\infty} & \left[\sum_{\stateq \in \states} \left(\valpha^{\top}\tranMtx^{\idx}\right)_{\stateq} \right] \\
            & = \sum_{\idx = 0}^{\infty} \sum_{k = \idx}^{\infty} p(\pathlen(\apath) = k \mid \wfsa) \\
            & = \sum_{k = 0}^{\infty} p(\pathlen(\apath) = k \mid \wfsa) + \sum_{k = 1}^{\infty} p(\pathlen(\apath) = k \mid \wfsa) + \sum_{k = 2}^{\infty} p(\pathlen(\apath) = k \mid \wfsa) + \ldots \\
            & = p(\pathlen(\apath) = 0 \mid \wfsa) + p(\pathlen(\apath) = 1 \mid \wfsa) + p(\pathlen(\apath) = 2 \mid \wfsa) + p(\pathlen(\apath) = 3 \mid \wfsa) + \cdots\\
            & \phantom{\, \, =p(\pathlen(\apath) = 0 \mid \wfsa)} + p(\pathlen(\apath) = 1 \mid \wfsa) + p(\pathlen(\apath) = 2 \mid \wfsa) + p(\pathlen(\apath) = 3 \mid \wfsa) + \cdots \nonumber \\
            & \phantom{\, \, =p(\pathlen(\apath) = 0 \mid \wfsa) + p(\pathlen(\apath) = 1 \mid \wfsa)} + p(\pathlen(\apath) = 2 \mid \wfsa) + p(\pathlen(\apath) = 3 \mid \wfsa) + \cdots \nonumber \\
            & \phantom{\, \, =p(\pathlen(\apath) = 0 \mid \wfsa) + p(\pathlen(\apath) = 1 \mid \wfsa) + p(\pathlen(\apath) = 2 \mid \wfsa)} + p(\pathlen(\apath) = 3 \mid \wfsa) + \cdots \nonumber \\
            & \phantom{\, \, =p(\pathlen(\apath) = 0 \mid \wfsa) + p(\pathlen(\apath) = 1 \mid \wfsa) + p(\pathlen(\apath) = 2 \mid \wfsa) + p(\pathlen(\apath) = 3 \mid \wfsa)} + \cdots \nonumber \\
            & = \sum_{\idx = 0}^{\infty} p(\pathlen(\apath) = n \mid \wfsa) + \sum_{\idx = 0}^{\infty} \idx \cdot p(\pathlen(\apath) = n \mid \wfsa) \\
            & = 1 + \sum_{\idx = 0}^{\infty} \idx \cdot p(\pathlen(\apath) = n \mid \wfsa) \\
            & = 1 + \underset{\apath \in \paths}{\mathds{E}}\left[\pathlen(\apath) \right].
        \end{align}
    \end{subequations}
\end{proof}

\subsubsection{Cross Entropy Between a \dpfsaAcr{} and a Neural LM} 
To stochastically approximate the cross entropy between a ground-truth \dpfsaAcr{} and a neural LM, we compute 
\begin{equation}
    \begin{aligned}
    \widehat{\entropy}\left(\pLM, \qLM\right)  = -\frac{1}{\datasetSize}\sum_{n=1}^{\datasetSize}\left( \log\qLM(\eos \mid \str^{(n)}) + \sum_{t=1}^{\mymacro{|\str_n|}}\log\qLM(\sym^{(n)}_t \mid \strlt^{(n)})\right),
\end{aligned}    
\end{equation}
where $\qLM(\overline{\sym}_t \mid \strlt^{(n)}) \defeq \softmaxFun{\outMtx \hiddStatetminus}{\overline{\sym}_t}$ is given by the neural LM.
The strings $\str^{(n)}$ are sampled i.i.d. from $\pLM$, the language model induced by the \dpfsaAcr{}.


\section{Additional Results}
\label{app:additional_results}

This section includes figures presenting the results of additional experiments augmenting and supporting the claims in the main paper.

\paragraph{\Cref{fig:general_performance}.} Overall performance of the neural LM models as a function of the ``total complexity'' of the \pfsaAcr{} they were optimised for, which we define as the sum of $\nsymbols$, $\nstates$, and $\pfsaRank$. Performance is measured as the cross-entropy of the neural model on a held-out test set. We compute the loss by summing it over symbols and dividing this sum by the number of strings in the test set. 
We can see that RNNs tend to overperform Transformers, especially for more complex \pfsaAcr{}s.

\paragraph{\Cref{fig:kl_vs_length}.} The $\KL{}$ of RNNs and Transformers as a function of the average string length of the \dpfsaAcr. Similarly to \Cref{tab:Transformers_coefficients}, we see that Transformers are much more sensitive to string length compared to RNNs.

\begin{figure}[t]
    \centering
    \includegraphics[width=0.7\columnwidth]{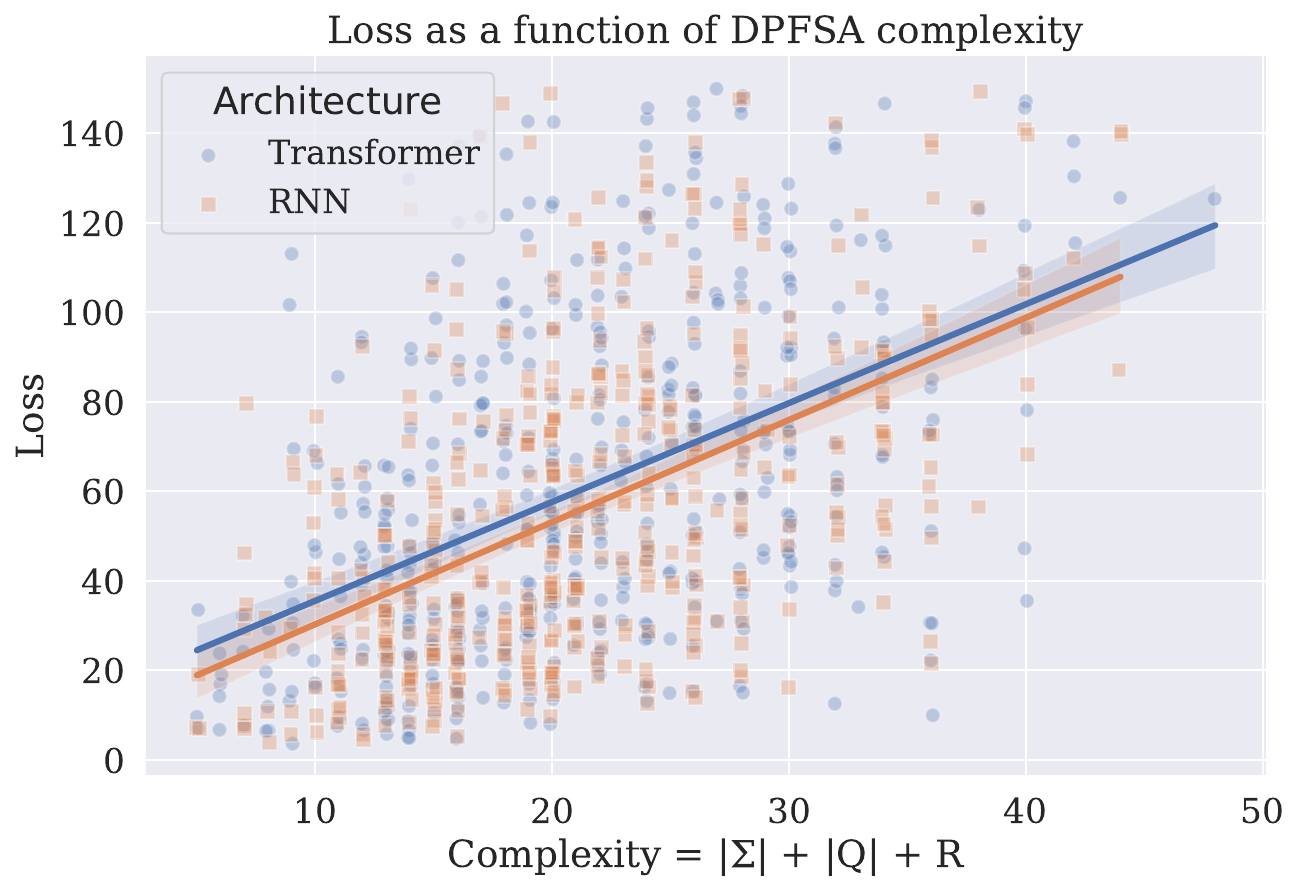}
    \caption{Validation performance of RNNs and Transformers as a function of the PFSA's complexity, computed as $\nsymbols + \nstates + \pfsaRank$. We compute the loss by summing it over symbols and dividing this sum by the number of strings in the test set. }
    \label{fig:general_performance}
\end{figure}

\begin{figure}[t]
    \centering
    \includegraphics[width=0.7\columnwidth]{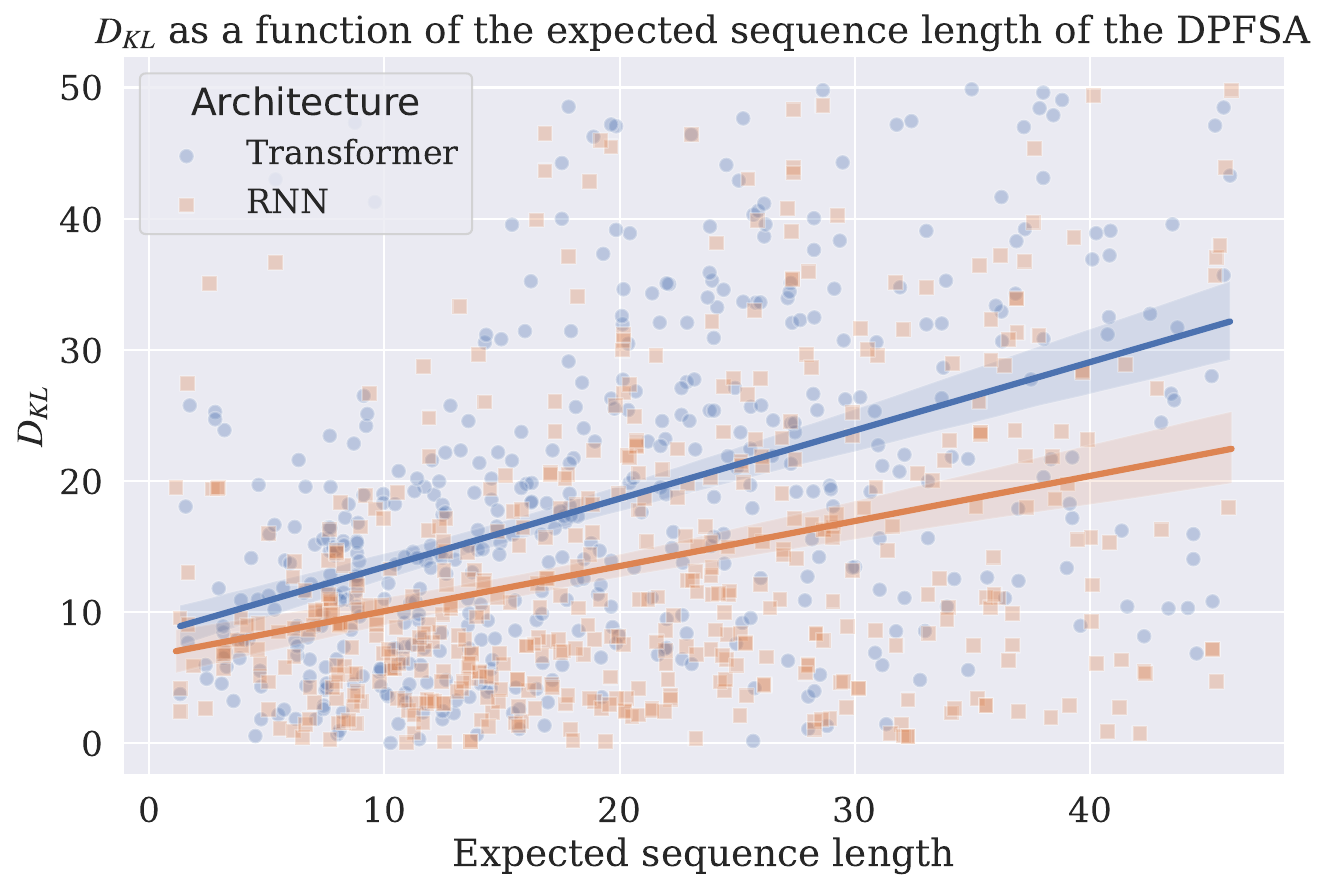}
    \caption{$\KL{}$ of RNNs and Transformers as a function of the average string length of the \dpfsaAcr.}
    \label{fig:kl_vs_length}
\end{figure}

\end{document}